\newif\ifarxiv
\newcommand*{\algrule}[1][\algorithmicindent]{%
  \makebox[#1][l]{%
    \hspace*{.2em}
    \vrule height .75\baselineskip depth .25\baselineskip
  }
}
\def\ALG@printindent{%
    \ifnum \theALG@nested>0
    \ifx\ALG@text\ALG@x@notext
    \else
    \unskip
    \ALG@printindent@tempcnta=1
    \loop
    \algrule[\csname ALG@ind@\the\ALG@printindent@tempcnta\endcsname]%
    \advance \ALG@printindent@tempcnta 1
    \ifnum \ALG@printindent@tempcnta<\numexpr\theALG@nested+1\relax
    \repeat
    \fi
    \fi
}
\patchcmd{\ALG@doentity}{\noindent\hskip\ALG@tlm}{\ALG@printindent}{}{\errmessage{failed to patch}}
\patchcmd{\ALG@doentity}{\item[]\nointerlineskip}{}{}{} 
\newcommand{\ml}[1]{{\color{Bittersweet}[ML: #1]}}
\newcommand{\unsafe}{\mathrm{u}}
\newcommand{\safe}{\mathrm{s}}
\newcommand{\indicator}[1]{\mathbbm{1}_{#1}}
\newcommand{\expect}{\mathbb{E}}
\newcommand{\pv}{\mathbf{v}}
\newcommand{\px}{\mathbf{x}}
\newcommand{\pX}{\mathbf{x}}
\newcommand{\reals}{\mathbb{R}}
\newcommand{\naturals}{\mathbb{N}}
\newcommand{\N}{\mathcal{N}}
\renewcommand{\P}{\mathcal{P}}
\renewcommand{\O}{\mathcal{O}}
\newcommand{\C}{\mathbbmss{O}}
\renewcommand{\L}{\mathcal{L}}
\newcommand{\up}[1]{\overline{#1}}
\newcommand{\low}[1]{\underline{#1}}
\newtheorem{lemma}{Lemma}
\newtheorem{definition}{Definition}
\newtheorem{problem}{Problem}
\newtheorem{proposition}{Proposition}
\newtheorem{corollary}{Corollary}
\newtheorem{theorem}{Theorem}
\newenvironment{proof}{\textit{Proof}.}{\hfill$\square$}
\newacronym{lp}{LP}{Linear Programming}
\newacronym{cegs}{CEGS}{Counter-Example Guided Synthesis}
\newacronym{gd}{GD}{Gradient Descent}
\newacronym{pwb}{PWB}{Piecewise Barrier}
\newacronym{pwc}{PWC}{Piecewise Constant}
\newacronym{pwa}{PWA}{Piecewise Affine}
\newacronym{sos}{SOS}{Sum-of-Squares}
\newacronym{nbf}{NBF}{Neural Barrier Function}
\newacronym{nndm}{NNDM}{Neural Network Dynamic Model}
\newacronym{imdp}{IMDP}{Interval Markov Decision Process}
\begin{document}

\begin{frontmatter}

\title{
{Piecewise Stochastic Barrier Functions}
} 

\thanks[footnoteinfo]{
Corresponding author: Rayan Mazouz 
}

\author[Boulder,equal]{Rayan Mazouz}\ead{rayan.mazouz@colorado.edu}, 
\author[Delft,equal]{Frederik Baymler Mathiesen},    
\author[Delft]{Luca Laurenti},
\author[Boulder]{Morteza Lahijanian}

\address[Boulder]{Department of Aerospace Engineering Sciences, University of Colorado Boulder, USA}                                         
\address[Delft]{Delft Center for Systems and Control, Delft University of Technology, The Netherlands}

\address[equal]{Equal contributions}

\begin{keyword}                           
Barrier Certificates, Stochastic Systems, Probabilistic Safety, Convex Optimization, Formal Verification
\end{keyword}                             

\begin{abstract}                          
This paper presents a novel stochastic barrier function (SBF) framework for safety analysis of stochastic systems based on piecewise (PW) functions. We first outline a general formulation of PW-SBFs. Then, we focus on PW-Constant (PWC) SBFs and show how their simplicity yields computational advantages for general stochastic systems. Specifically, we prove that synthesis of PWC-SBFs reduces to a minimax optimization problem.
Then, we introduce three efficient algorithms to solve this problem, each offering distinct advantages and disadvantages.
The first algorithm is based on dual linear programming (LP), which provides an exact solution to the minimax optimization problem. The second is a more scalable algorithm based on iterative counter-example guided synthesis, which involves solving two smaller LPs. The third algorithm solves the minimax problem using gradient descent, which admits even better scalability. We provide an extensive evaluation of these methods on various case studies, including neural network dynamic models, nonlinear switched systems, and high-dimensional linear systems. Our benchmarks demonstrate that PWC-SBFs outperform state-of-the-art methods, namely sum-of-squares and neural barrier functions, and can scale to eight dimensional systems.
\end{abstract}

\end{frontmatter}

\section{Introduction}
Autonomous systems deployed in \emph{safety-critical} domains demand thorough verification to ensure reliability. Examples of such systems include autonomous vehicles in human-shared streets~\cite{shalev2017formal}, aerospace systems (from aerial transportation~\cite{lee2016planning} to space exploration~\cite{reed2024shielded, mazouz2021dynamics}), and surgical robotics~\cite{guiochet2017safety}. However, this verification process faces major challenges arising from the complexities in dynamics as well as uncertainties due to the physics or randomized algorithms of the system.
One widely employed method to provide necessary safety guarantees is \textit{stochastic barrier functions} (SBFs)~\cite{kushner1967stochastic,  prajna2007framework, santoyo2021barrier}.
SBFs provide a lower bound on safety probabilities, hence enabling \emph{safety certification}.  
Despite the extensive studies on SBFs, existing tools largely remain limited to simplistic settings, namely simple dynamics with low dimensions. 
This work aims to address these limitations by focusing on the development of a theoretical and computational framework for barrier certificate generation tailored to complex systems.

The literature presents two prominent methods for synthesizing SBFs: \gls{sos} optimization~\cite{santoyo2021barrier, mazouz2022safety} and  \gls{nbf}~\cite{mathiesen2022safety, dawson2022safe}. \gls{sos} optimization relies on a predefined \gls{sos} polynomial template for the SBF and aims to determine its coefficients using convex optimization. While powerful, this method suffers when the system dimensionality is large \cite{ahmadi2014dsos} or the safe set is non-convex. On the other hand, \gls{nbf} employ neural networks (NNs) to act as barrier functions, addressing the conservatism of \gls{sos} in non-convex settings. Yet, \gls{nbf} face scalability issues due to the need to verify the NN against the SBF conditions, which is particularly challenging for high-dimensional systems.

\begin{figure*}[t!]
    \centering
    \begin{minipage}{\textwidth}
        \begin{subfigure}[t]{0.25\textwidth}
            \centering
            \includegraphics[width=1\linewidth]{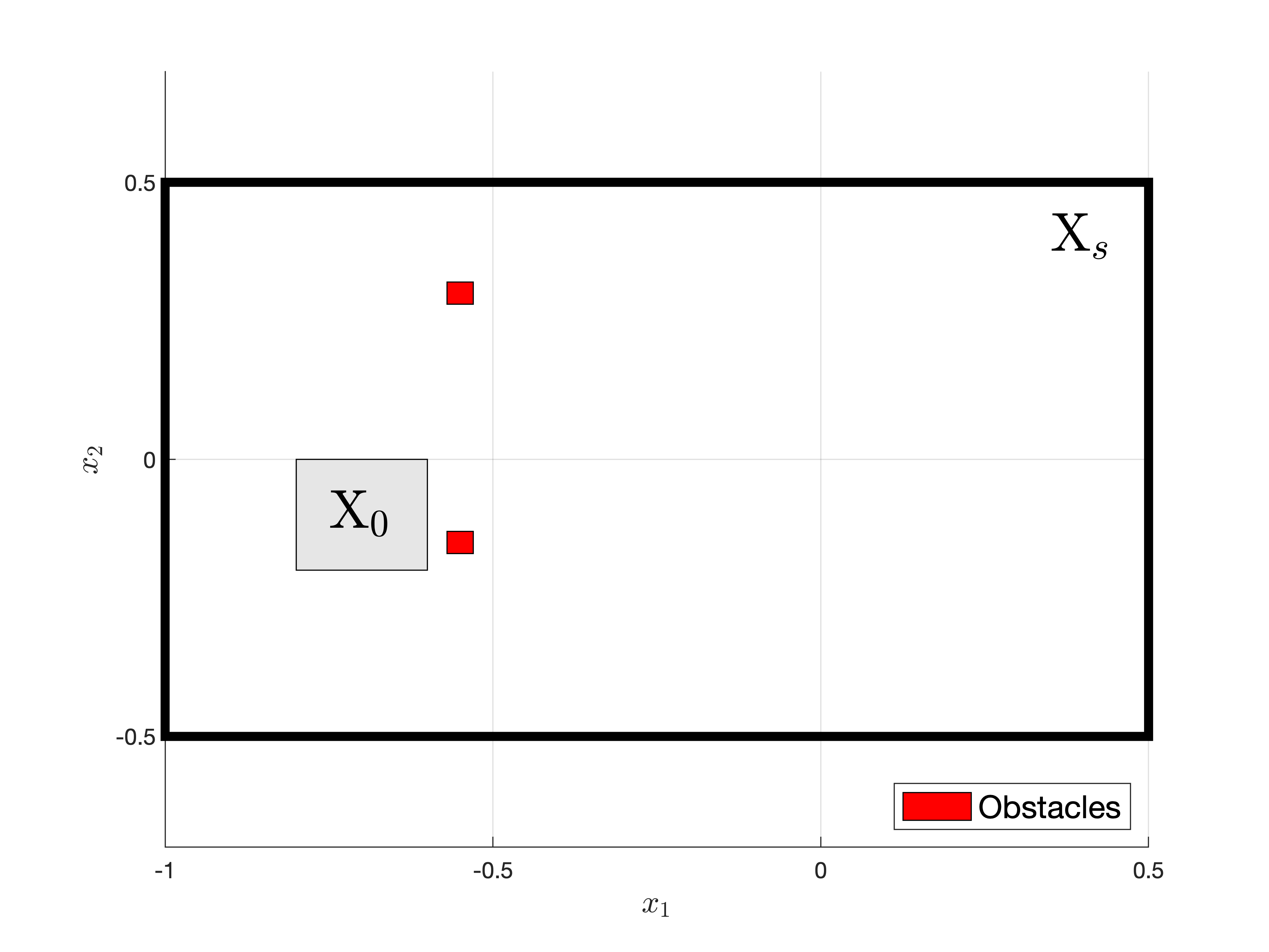}
            \subcaption{Initial and safe sets}
                   \label{fig:env}
        \end{subfigure}%
        \begin{subfigure}[t]{0.25\textwidth}
            \centering
            \includegraphics[width=1\linewidth]{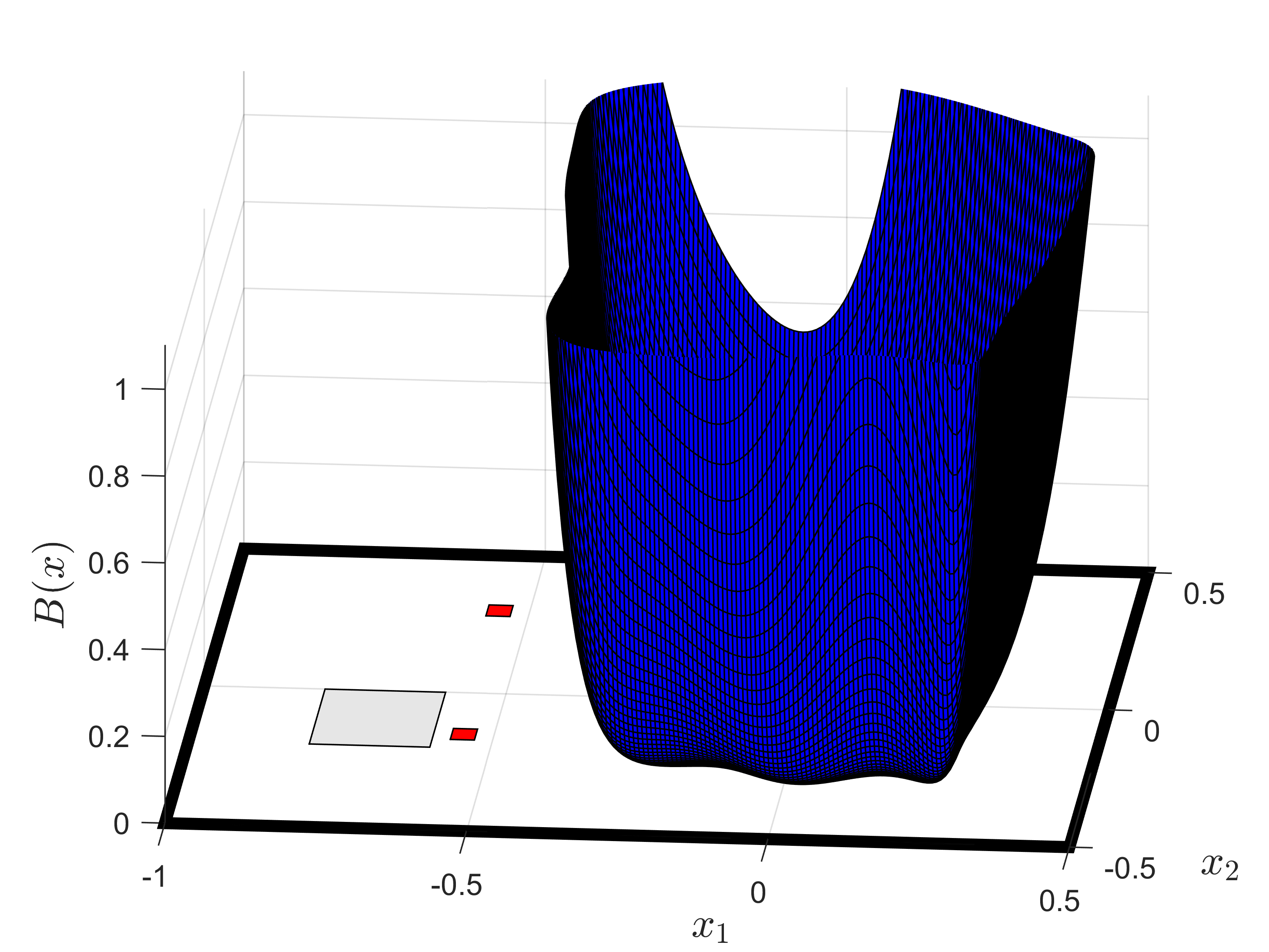}
        \subcaption{Degree-30 SOS SBF}
        \label{fig:barrier_sos}
        \end{subfigure}%
                \begin{subfigure}[t]{0.25\textwidth}
            \centering
            \includegraphics[width=1\linewidth]{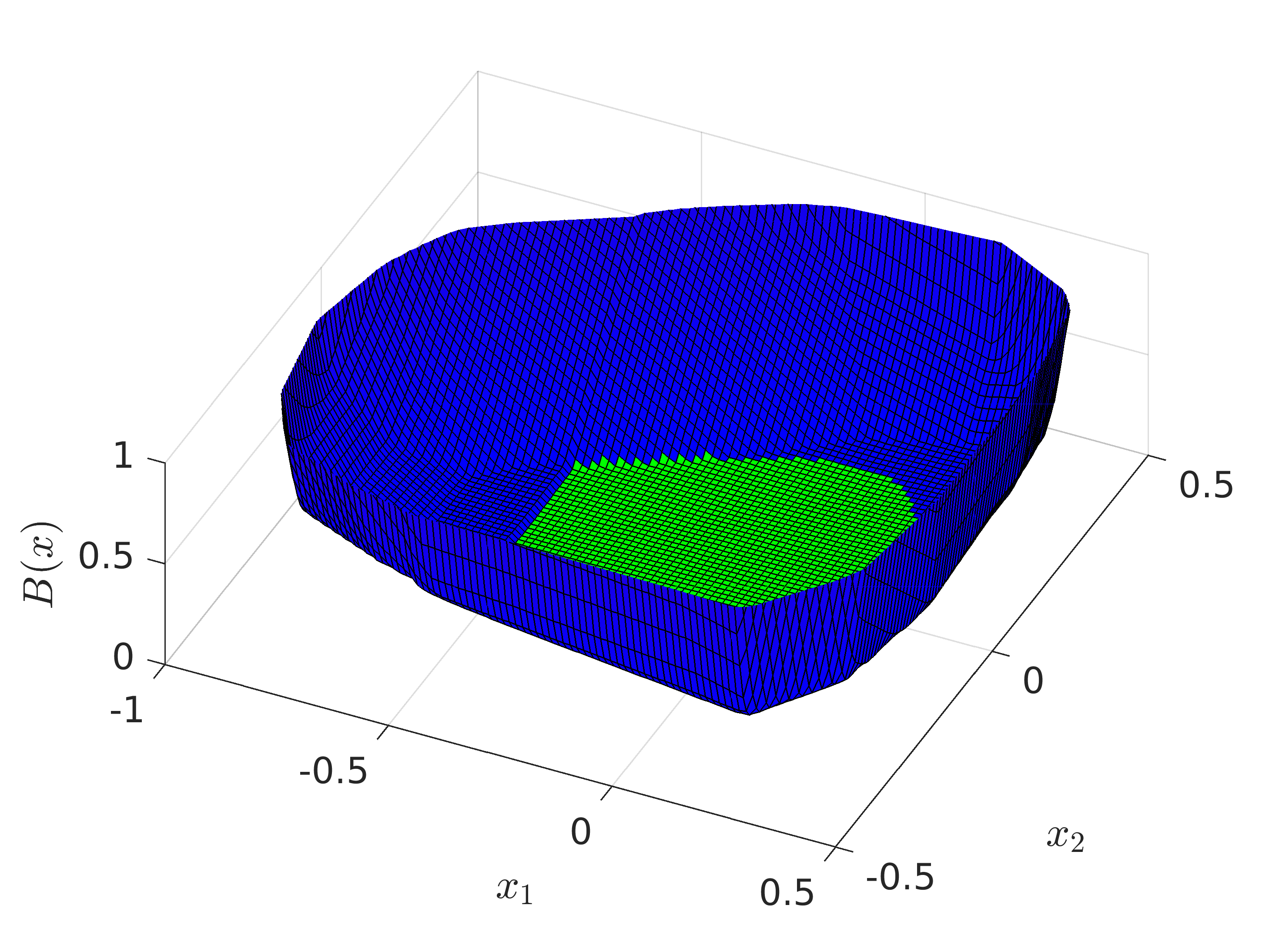}
            \subcaption{NBF}
                   \label{fig:nbf}
        \end{subfigure}%
        \begin{subfigure}[t]{0.25\textwidth}
            \centering
            \includegraphics[width=1\linewidth]{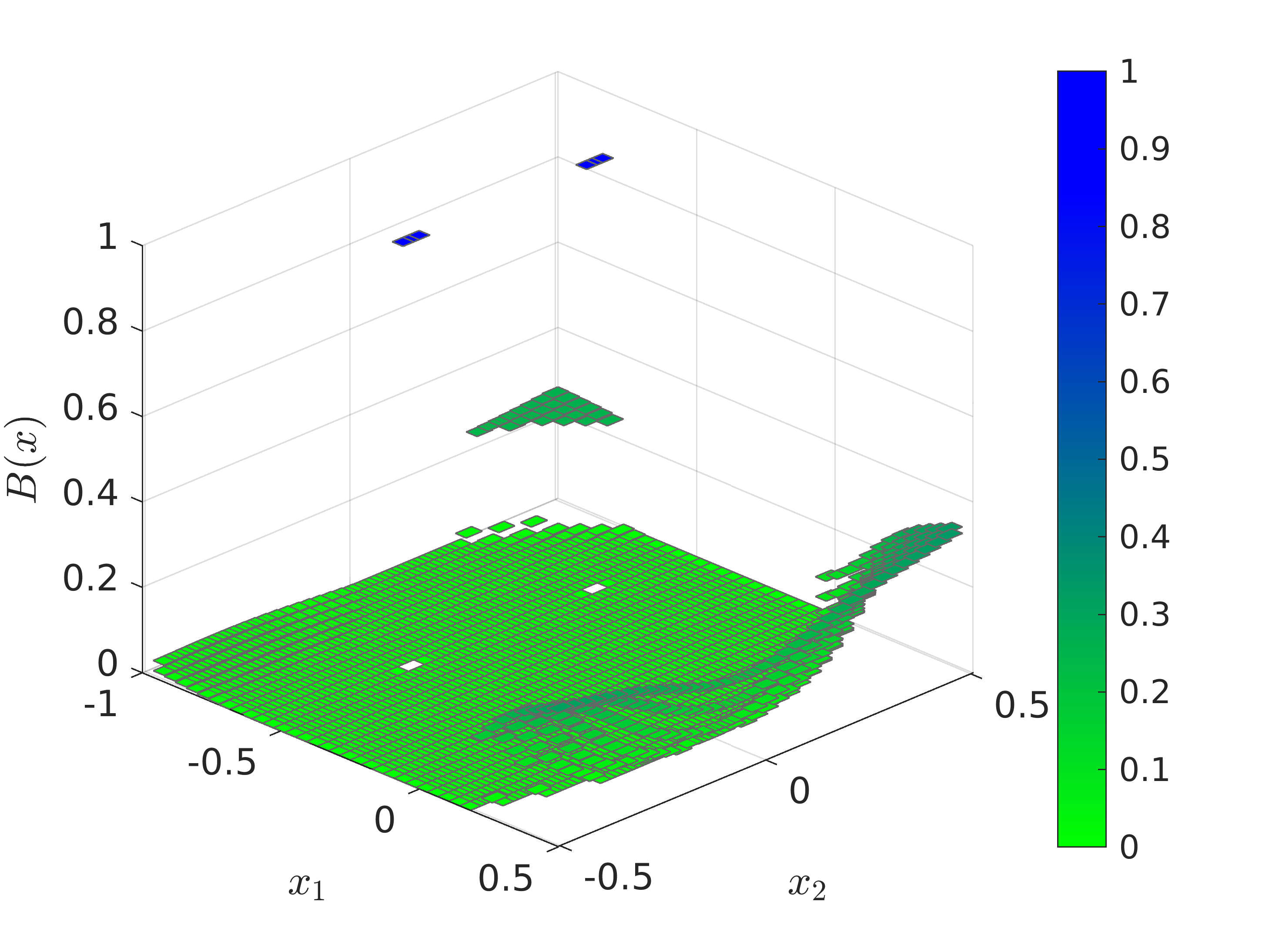}          
            \subcaption{PWC SBF}
            \label{fig:barrier_pwc}
        \end{subfigure}%
    \end{minipage}
    \caption{
    Example of a 2D stochastic system with a non-convex safe set and dynamics $\px_{k+1} = 0.5 \px_{k} + \pv$, where noise $\pv$ has a Gaussian distribution.  Fig. \ref{fig:env} shows the initial and safe sets. The SOS barrier of degree 30, the NBF, and the PWC-SBF for this system are shown in Figs. \ref{fig:barrier_sos} - \ref{fig:barrier_pwc}, respectively.  The obtained safety probability for 10 time steps and computation time for SOS are $P_{\text{SOS}} = 0.075$ and $\tau_{\text{SOS}} = 197$s, for NBF are  $P_{\text{NBF}} = 0.93$ and $\tau_{\text{NBF}} = 3600$s, and for PWC-SBF are $P_{\text{PWC}}= 0.93$ and $\tau_{\text{PWC}} = 69$s.
    }
    \label{fig:sos_pwc}
\end{figure*}

To illustrate these challenges, consider the 2-dimensional linear system with additive Gaussian noise in Figure~\ref{fig:sos_pwc}.  The initial and safe sets are depicted in Figure~\ref{fig:env}. Note that the safe set is non-convex.   Figures~\ref{fig:barrier_sos} and~\ref{fig:nbf} show the synthesized \gls{sos} SBF of degree 30 and \gls{nbf}, utilizing the state-of-the-art techniques presented in \cite{mazouz2022safety} and \cite{mathiesen2022safety}, respectively.  Generally, a higher value of the SBF indicates a lower safety probably bound for the system.  The corresponding lower bound on the safety probably (probability of remaining in the safe set) for 10 time steps is 0.075 for the \gls{sos} SBF and 0.93 for the \gls{nbf} with computation times of 197 seconds and 3600 seconds, respectively. Notice that \gls{nbf} outperforms \gls{sos} by more than $10$-fold in safety probability. However, it comes at the cost of an order of magnitude (nearly $20\times$) increase in the computation time for this 2-dimensional system. In fact, for \gls{nbf}, the computation time required to guarantee that a trained NN is an SBF quickly explodes as dimensionality increases.

In this manuscript, we introduce an SBF framework based on \emph{piecewise} (PW) functions. 
We first show a general formulation for PW-SBFs.  Then, we focus on PW-Constant (PWC) SBFs, whose simplicity gives rise to an elegant formulation applicable to \emph{general} stochastic systems.  Our key insight is that the challenging operations of expectation and function composition required by the martingale condition of SBFs can be significantly mitigated by using constant functions, enabling both simplicity and scalability.  Guided by this insight, we prove that the PWC-SBF synthesis problem reduces to a minimax optimization problem, for which we introduce three efficient computational methods. 
First, we show that, even for nonlinear stochastic systems, the minimax optimization problem can be reduced to a linear program (LP) via a dual formulation.  Second, to enable scalability, we introduce an iterative algorithm that requires solving two smaller LPs based on the \emph{counter-example guided synthesis} (CEGS) principle. 
CEGS however may need large memory.  
Third, we show that the minimax problem can be solved using a Gradient Descent (GD) algorithm, which admits even better scalability and resolves the memory issue of CEGS.  It however poses a challenge in designing a convergence criterion.
The power of PWC-SBFs is illustrated in Figure~\ref{fig:barrier_pwc}, where the same safety probability bound of $0.93$ as \gls{nbf} is achieved with $69$ seconds computation time, i.e., an order of magnitude ($50\times$) faster in computation. Our extensive evaluations demonstrate the efficacy of the proposed methods, outperforming state of the art and scaling to eight dimensional systems. 

In summary, the key contributions of this work are:
\begin{itemize}
    \item A general formulation for PW-SBFs with a particular focus on PWC-SBFs, providing a simple formulation for safety verification of general stochastic systems.
    \item A derivation of the PWC-SBF synthesis problem as a minimax convex optimization problem. 
    \item Three efficient and scalable computational methods for PWC-SBF synthesis:
    \begin{itemize}
        \item an LP duality-based approach with a proof of zero-duality gap, i.e., obtaining an exact solution to the minimax problem,
        \item a CEGS algorithm with a proof of convergence to the optimal solution in finite time, and
        \item a GD algorithm that offers significant reduction in computation time.
    \end{itemize}
    \item Extensive case studies to illustrate the performance of the proposed methods against state-of-the-art methods of \gls{sos} and \gls{nbf}, and an analysis hereof.
\end{itemize}
Overall, this work offers a significant advancement in SBF synthesis for safety-critical systems, providing both theoretical insights and practical computational methods that enhance reliability and scalability in complex environments.

\subsection{Related Work} 
Barrier functions, also known as barrier certificates~\cite{prajna2004safety,ames2014control, ames2016control}, offer a formal approach to ensure the safety of dynamical systems, particularly in nonlinear and hybrid models~\cite{prajna2004safety, prajna2006barrier}.  
Works~\cite{prajna2005necessity, ames2019control} establish 
the essential role of barrier certificates in providing safety guarantees for closed-loop systems.
The initial studies on barrier functions primarily focused on continuous-time and non-stochastic settings. 

Recent works~\cite{prajna2007framework, santoyo2021barrier, jagtap2020formal, mazouz2022safety, mathiesen2022safety} consider safety analysis of uncertain systems via stochastic barrier functions (SBFs).
Initially proposed in~\cite{kushner1967stochastic}, SBFs offer a framework for studying the probabilistic safety of stochastic systems grounded in martingale theory. Building on this foundation, \cite{prajna2007framework} introduces the concept of continuous-time SBFs, facilitating the construction of barrier certificates without the need for explicit computation of reachable sets. That method enables the handling of polynomial nonlinearities and uncertainty, ultimately providing an upper bound on the probability of unsafety.
Similar to those works, this paper focuses on probabilistic safety of stochastic systems, but assumes discrete-time dynamics.

Discrete-time stochastic system verification using SBFs was initially introduced in~\cite{santoyo2021barrier}, where the supermartingale condition was relaxed to a c-martingale for finite-time horizon problems. Similar to prior works, these barriers are computed through \gls{sos} optimization. 
While this method offers benefits in terms of problem convexity, it presents significant challenges in scalability and struggles with non-convex safe sets. Moreover, its applicability is confined to systems with polynomial dynamics or a subset thereof. 
In \cite{jagtap2020formal}, an alternative to \gls{sos} is proposed, leveraging counter-example guided inductive synthesis with SMT~\cite{de2008z3} or SAT~\cite{sebastiani2015optimathsat} solvers. These approaches draw from techniques for synthesizing Lyapunov functions~\cite{ravanbakhsh2015counter} and reachable sets~\cite{ravanbakhsh2017class} for switched systems using counter-examples. Those methods, however, cannot ensure termination within finite time.

To extend the \gls{sos} optimization technique to non-polynomial systems, \cite{mazouz2022safety} proposed a method by using tools from \cite{zhang2018efficient} to find piecewise affine bound on the dynamics. 
These bounds are utilized for the verification and control of neural network dynamic models. Works \cite{santoyo2021barrier,jagtap2020formal} propose iterative approaches to synthesize a control strategy to enforce safety of affine-in-control stochastic dynamical systems over finite time horizons. In \cite{mazouz2022safety}, an exact method for the synthesis of safe controllers is proposed by utilizing the \gls{sos} convexity property of the barrier. That still remains largely limited by the considered class of \gls{sos} functions. 

To address the limitations encountered by \gls{sos}, neural barrier functions (\gls{nbf}) \cite{mathiesen2022safety, dawson2022safe} have recently been proposed. 
\gls{nbf} use the representational power of neural networks (NNs) to train a highly amenable SBF candidate,
which then needs to be verified to establish whether it is a proper certificate.
Work \cite{vzikelic2023learning} 
combines \gls{nbf} with reinforcement learning to
train a deep policy based on a novel formulation of reach-avoid martingales. 
A major limitation of these methods is their reliance on NN verification, which suffer from scalability.
This paper aims to address the limitations of the \gls{sos} and \gls{nbf} approaches in terms of both conservatism and scalability.
Our work mainly builds on the results of \cite{laurenti2023unifying}, which establishes the relationship between barriers and dynamic programming. With this insight, we propose a novel certificate synthesis method based on piecewise functions. 

\section{Problem Formulation}
\label{sec:problem}


We consider a stochastic process described by the following stochastic difference equation
\begin{equation}
    \label{eq:system}
    \px_{k+1} = f(\px_k,\pv_k),
\end{equation}
where state $\pX_k$ takes values in $\reals^n$, and noise $\pv_k$ is an independent and identically distributed random variable taking values in $\mathbb{R}^{\mathrm{v}}$ with associated probability distribution $p_{\pv}$. Further, $f:\mathbb{R}^n \times \mathbb{R}^{\mathrm{v}} \to \mathbb{R}^n$ is the vector field representing the one step dynamics of System~\eqref{eq:system}. 
We assume that $f$ is almost everywhere continuous.
Intuitively,  $\px_{k}$ is a general model of a stochastic process with possibly nonlinear dynamics and non-additive noise. 

To define a probability measure for System~\eqref{eq:system}, for a measurable set $X$,
we define the one-step stochastic kernel $T(X\mid x)$ as
\begin{equation}
\label{eq:transition_kernel}
    T(X\mid x):= \int_{\mathbb{R}^{\mathrm{v}}} \mathbf{1}_X
(f(x,v))p_{\mathbf{v}}(dv),
\end{equation}
where $\mathbf{1}_{X}$ is the indicator function for set $X$, defined as  
\begin{equation*}
    \mathbf{1}_{X}(x) =
    \begin{cases}
        1 & \text{if } x \in X\\
        0 & \text{otherwise}
    \end{cases}.
\end{equation*}
It follows that for an initial condition $x_0 \in \mathbb{R}^n,$ $T$ induces a unique  probability measure  $\mathrm{Pr}^{x_0}$   \cite{bertsekas2004stochastic}, such that for measurable sets
$X_0,X_k \subseteq X$ where 
$k \in \naturals$ it holds that 
\begin{align*}
    &\mathrm{Pr}^{x_0}[\pX_0\in X_0] =  \mathbf{1}_{X_0}(x_0), \\
    &\mathrm{Pr}^{x_0}[\pX_k\in X_k   \mid \pX_{k-1}=x] =  T^{}(X_k \mid x).
\end{align*}
The definition of $\mathrm{Pr}^{x_0}$ allows one to make probabilistic statements over the trajectories of System \eqref{eq:system}. 
This work seeks to quantify \emph{probabilistic safety}, which together together with its dual, \emph{probabilistic
reachability} \cite{laurenti2023unifying}, are commonly used to quantify safety for stochastic dynamical systems. Work~\cite{abate2008probabilistic} presents a generalization of the notion of invariance.

\begin{definition}[Probabilistic Safety]
    Let $X_\safe \subset \mathbb{R}^n$ be a bounded set representing the safe set, $X_0 \subseteq X_\safe$ be the initial set, and $N \in \mathbb{N}$ be the time horizon. Then, \emph{probabilistic safety} is defined as
    $$ P_{\safe}(X_\safe,X_0,N)= \inf_{x_0 \in X_0} \mathrm{Pr}^{x_0}[\forall k \in \{0,...,N\}, \pX_k \in X_\safe]. $$
\end{definition}

The problem we consider in this work is as follows.

\begin{problem}[Safety Certificate]
\label{Prob:Verification}
Consider a safe set  $X_\safe \subset \reals^n$ and an initial set $X_0 \subseteq X_\safe$. Then, for a given threshold $\delta_\safe \in [0,1]$, certify whether, starting from $X_0$, System~\eqref{eq:system} remains in $X_\safe$ for $N$ time steps with at least probability $\delta_\safe$, i.e.,
$$P_\safe(X_\safe,X_0,N ) \geq \delta_\safe.$$
\end{problem}
Problem~\ref{Prob:Verification} seeks to compute the probability that $\pX_k$ remains within a given safe set, e.g., it avoids obstacles and undesirable states.
Computation of this probability is particularly challenging because System~\eqref{eq:system} is stochastic and $f$ can be a nonlinear (and non-polynomial) function. Further, since set $X_\safe$ can be non-convex, the problem is generally non-convex.  This implies that convex optimization tools cannot be used out-of-the-box for this problem in an efficient way.

\paragraph*{Approach overview}
Our approach to Problem~\ref{Prob:Verification} is based on SBFs, which are value functions to guarantee a lower bound on the probability of safety in the sense of Problem \ref{Prob:Verification}. We review the theory of SBFs in Section~\ref{sec:general_sbf_theory}.
To address the inefficiencies of existing synthesis techniques for continuous barrier functions due to, e.g., non-convexity of $X_\safe$, in Section \ref{sec:piecewise_sbf_theory}, we focus on \emph{piecewise} functions.
We show that, based on such functions, barrier certificates can be provided for the safety of System~\eqref{eq:system} by establishing a lower bound on $P_\safe$.  Further, we show in Section \ref{sec:pwc_sbf} that by choosing a piecewise constant function, the synthesis problem reduces to a linear optimization problem, for which we introduce three efficient computational frameworks. 
\section{Stochastic Barrier Functions}\label{sec:general_sbf_theory}

In this section, we provide an overview of stochastic barrier functions.

Consider System~\eqref{eq:system} with safe set $X_\safe \subset \mathbb{R}^n,$ initial set $X_0\subseteq X_\safe,$ and unsafe set $X_\unsafe=\mathbb{R}^n\setminus X_\safe$. Then, a function $B:\mathbb{R}^n \to \mathbb{R}$ is a \emph{stochastic barrier function} (SBF) if there exist scalars $\eta,\beta \geq 0$ such that
\begin{subequations}
    \begin{align}
        &B(x) \geq 0 \qquad &&\forall x\in \mathbb{R}^n\label{eq:barrier_ss},\\
        &B(x) \geq 1 \qquad &&\forall x\in X_\unsafe\label{eq:barrier_unsafe},\\
        &B(x) \leq \eta \qquad &&\forall x\in X_0\label{eq:barrier_initial},\\
        &\mathbb{E}[B(f(x, \pv))] \leq  B(x) + \beta \qquad && \forall x\in X_\safe \label{eq:barrier_expectation},
    \end{align}
\end{subequations}
where $\pv \in \mathbb{R}^n$ is a random variable with density $p_\pv$. 
Given an SBF $B$, then
a lower bound on the probability of safety for System~\eqref{eq:system} can be obtained according to the following proposition.
\begin{theorem}[{\cite[Theorem 2]{laurenti2023unifying}}]
\label{prop:barrier}
If there exists function $B$ that satisfies Conditions~\eqref{eq:barrier_ss}-\eqref{eq:barrier_expectation}, then for a horizon $N\in \naturals$, it follows that
    \begin{align}
        P_\safe(X_\safe,X_0,N )
        \geq 1-( \eta + \beta N). \label{eq:probabilityBarrierFunctions}
    \end{align}
\end{theorem}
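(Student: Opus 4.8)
The plan is to prove the bound by constructing a supermartingale-type process from the SBF $B$ and invoking a version of Ville's (Doob's) maximal inequality for nonnegative supermartingales. First I would define the stochastic process $Y_k := B(\pX_k)$ along trajectories of System~\eqref{eq:system} started from $x_0 \in X_0$. The key observation, from Condition~\eqref{eq:barrier_expectation}, is that on the event that the trajectory has stayed in $X_\safe$ up to time $k$, we have $\expect[Y_{k+1} \mid \pX_k = x] = \expect[B(f(x,\pv))] \le B(x) + \beta = Y_k + \beta$, so the shifted process $Y_k - \beta k$ is a supermartingale while the trajectory remains safe. To handle the ``while safe'' restriction cleanly, I would either stop the process at the first exit time $\tau := \inf\{k : \pX_k \notin X_\safe\}$, or equivalently work with the process $\tilde Y_k := B(\pX_{k \wedge \tau}) - \beta (k\wedge \tau)$, which is a genuine nonnegative supermartingale on all of $\{0,\dots,N\}$ (nonnegativity uses Condition~\eqref{eq:barrier_ss}).

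Next I would bound the probability of unsafety by a level-crossing event. If the trajectory exits $X_\safe$ at some time $k \le N$, then at that time $B(\pX_k) \ge 1$ by Condition~\eqref{eq:barrier_unsafe}, hence $\sup_{0 \le k \le N} \tilde Y_k \ge 1 - \beta N$ on the event of exiting within the horizon. Therefore
\begin{align*}
    1 - \mathrm{Pr}^{x_0}[\forall k \in \{0,\dots,N\},\ \pX_k \in X_\safe]
    &\le \mathrm{Pr}^{x_0}\Big[\sup_{0\le k \le N} \tilde Y_k \ge 1 - \beta N\Big].
\end{align*}
Applying Ville's maximal inequality to the nonnegative supermartingale $\tilde Y_k$ gives $\mathrm{Pr}^{x_0}[\sup_{k} \tilde Y_k \ge \lambda] \le \expect[\tilde Y_0]/\lambda$ for $\lambda > 0$, and here $\expect[\tilde Y_0] = B(x_0) \le \eta$ by Condition~\eqref{eq:barrier_initial}. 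Substituting $\lambda = 1 - \beta N$ yields an unsafety bound of $\eta/(1-\beta N)$; the stated cleaner bound $1 - (\eta + \beta N)$ then follows since $\eta/(1-\beta N) \le \eta + \beta N$ whenever the right-hand side is in $[0,1]$ (and when $\eta + \beta N \ge 1$ the inequality~\eqref{eq:probabilityBarrierFunctions} is vacuous as probabilities are nonnegative). Rearranging gives $P_\safe(X_\safe, X_0, N) \ge 1 - (\eta + \beta N)$, and taking the infimum over $x_0 \in X_0$ preserves the bound since it holds uniformly.

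The main obstacle I anticipate is the measure-theoretic bookkeeping around the exit time and the conditional expectation in Condition~\eqref{eq:barrier_expectation}: one must verify that $B$ is measurable (which needs $f$ almost-everywhere continuous and $B$ at least Borel, consistent with the piecewise structure used later), that $\tilde Y_k$ is adapted and integrable, and that the supermartingale property genuinely holds at the stopping time — in particular that stopping does not break Condition~\eqref{eq:barrier_expectation}, which only holds for $x \in X_\safe$. A careful statement uses the filtration generated by $\pX_0,\dots,\pX_k$ and the fact that $\{\tau > k\}$ is $\mathcal F_k$-measurable, so that on $\{\tau > k\}$ the required inequality applies and on $\{\tau \le k\}$ the process is frozen. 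Since this is essentially a restatement of \cite[Theorem 2]{laurenti2023unifying}, I would cite that result for the technical martingale argument and present the above as the conceptual derivation, emphasizing the role of each of the four SBF conditions.
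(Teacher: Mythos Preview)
The paper does not give its own proof of this theorem; it simply cites \cite[Theorem~2]{laurenti2023unifying} and moves on. So your closing instinct---to cite that result for the technical martingale argument---is exactly what the paper does.

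That said, the argument you sketch has a genuine gap. Ville's (Doob's) maximal inequality in the form $\mathrm{Pr}[\sup_k \tilde Y_k \ge \lambda] \le \expect[\tilde Y_0]/\lambda$ requires $\tilde Y_k$ to be a \emph{nonnegative} supermartingale. Your shifted process $\tilde Y_k = B(\pX_{k\wedge\tau}) - \beta(k\wedge\tau)$ is indeed a supermartingale, but nonnegativity fails: Condition~\eqref{eq:barrier_ss} gives $B \ge 0$, not $B - \beta k \ge 0$. So the step ``applying Ville's maximal inequality to the nonnegative supermartingale $\tilde Y_k$'' is not justified as written, and without it you cannot reach the $\eta/(1-\beta N)$ bound.

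The standard argument (going back to Kushner, and presumably what is in the cited reference) avoids Ville altogether. One works directly with the nonnegative stopped process $Z_k := B(\pX_{k\wedge\tau})$ and iterates the c-martingale condition to obtain $\expect[Z_N] \le B(x_0) + \beta N \le \eta + \beta N$. Since $\{\tau \le N\} \subseteq \{Z_N \ge 1\}$ by Condition~\eqref{eq:barrier_unsafe}, Markov's inequality on the nonnegative $Z_N$ gives $\mathrm{Pr}^{x_0}[\tau \le N] \le \expect[Z_N] \le \eta + \beta N$ directly. This yields the additive bound without any detour through $\eta/(1-\beta N)$, and the measure-theoretic bookkeeping you flag (adaptedness, that Condition~\eqref{eq:barrier_expectation} only applies on $\{\tau > k\}$) is handled exactly as you describe.
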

Note that Equation~\eqref{eq:probabilityBarrierFunctions} provides a lower bound on the probability of remaining in the safe set. 
Hence, $B$ is called a barrier certificate
if the right hand side of Equation~\eqref{eq:probabilityBarrierFunctions} is greater than or equal to threshold $\delta_\safe$, i.e., 
$$1-(\eta + \beta N) \geq \delta_\safe.$$
A major benefit of using barrier certificates for safety analysis is that  static Conditions~\eqref{eq:barrier_ss}-\eqref{eq:barrier_expectation} enable probabilistic reasoning on the time evolution of the stochastic system without the need to evolve the system.
A popular approach to finding an SBF $B$ is to limit the search to the set of \gls{sos} polynomials of a given degree.  Then, a \gls{sos} optimization problem can be formulated to synthesize the parameters of the barrier polynomial, where the objective function is $\min \eta + N \beta$, subject to Conditions~\eqref{eq:barrier_ss}-\eqref{eq:barrier_expectation}
\cite{prajna2007framework, santoyo2021barrier, mazouz2022safety}. 
The \gls{sos} approach however becomes very conservative when $X_\safe$ is non-convex.
Recent techniques address this issue by using the power of neural networks as universal approximators to learn a \gls{nbf}~\cite{mathiesen2022safety, dawson2022safe}.
Nevertheless, while training \gls{nbf}s can be done efficiently, checking Conditions~\eqref{eq:barrier_ss}-\eqref{eq:barrier_expectation} against the network is challenging, which limits scalability of \gls{nbf}s.  
To tackle these challenges, we propose piecewise barrier functions below.





\section{Piecewise Stochastic Barrier Functions}\label{sec:piecewise_sbf_theory}

In this paper, we introduce the notion of \textit{piecewise} SBF (PW-SBF). 
Consider a partition $X_1,\ldots,X_K$ of safe set $X_\safe$ in $K$ compact sets, i.e.,
$$\bigcup_{i=1}^K X_i = X_\safe \quad \text{ and } \quad X_i\cap X_j = \emptyset,$$ for all $i\neq j \in \{1,\ldots,K\}$,
such that vector field $f$ is continuous in each region $X_i$ and we further assume that the boundary of each set has measure zero w.r.t. $T(\cdot  \mid x)$ for any $x \in X_s$.
Further, let $B_i: X_i \to \mathbb{R}$ be a real-valued continuous function for every $i \in \{1,\ldots, K\}$.  We define PW function 
\begin{equation}
    \label{eq:pwf}
    B(x) = 
    \begin{cases}
        B_i(x) & \text{if } x \in X_i\\
        1 & \text{otherwise.}
    \end{cases}
\end{equation}



The following corollary characterizes the general result of Theorem~\ref{prop:barrier} for the PW function $B$.

\begin{corollary}[Piecewise SBF]
    \label{th:pwa}
    Piecewise function $B(x)$ in Equation~\eqref{eq:pwf} is a stochastic barrier function for System~\eqref{eq:system} if $\forall i \in \{1,\ldots,K\}$ there exist scalars $\beta_i,\eta \geq 0$ such that
    \begin{subequations} 
        \begin{align}
            & B_i(x)\geq 0      \qquad \qquad \qquad \qquad \qquad \forall x \in X_i,
            \label{eq:pwb_cond1}
            \\ 
            & B_{i}(x) \leq \eta  \qquad \qquad  \qquad \qquad \qquad \forall x \in X_i \cap X_0,
            \label{eq:pwb_cond2}
            \\
            & \sum_{j = 1}^K \expect[B_j(\px') \mid \px' \in X_j] \cdot T(X_j \mid x) + T(X_\unsafe \mid x) \nonumber  \\
            & \qquad \;\: \leq  B_{i}(x) + \beta_{i} \qquad \qquad \qquad \forall x \in X_i,  \label{eq:total_law_exp}
        \end{align}
    \end{subequations}
    where $\px' = f(x,\pv)$ and $T$ is the transition kernel in Equation~\eqref{eq:transition_kernel}.  
    Then, it holds that 
    \begin{equation}
        \label{eq:pwb_safetyprob}
        P_\safe(X_\safe,X_0, N) \geq 1 - (\eta + N\cdot \max_{i \in \{1,...,K\} }\beta_i). 
    \end{equation}
\end{corollary}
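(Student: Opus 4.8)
The plan is to reduce Corollary~\ref{th:pwa} to Theorem~\ref{prop:barrier}: I will show that the piecewise conditions~\eqref{eq:pwb_cond1}--\eqref{eq:total_law_exp} force the piecewise function $B$ of~\eqref{eq:pwf} to satisfy the four defining conditions~\eqref{eq:barrier_ss}--\eqref{eq:barrier_expectation} of a stochastic barrier function, taking the single scalar $\beta := \max_{i\in\{1,\ldots,K\}}\beta_i$ together with the given $\eta$. Once that is established, the bound~\eqref{eq:pwb_safetyprob} is precisely~\eqref{eq:probabilityBarrierFunctions} evaluated with this $\beta$, so it follows at once.

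First I would dispatch the three pointwise conditions, which are essentially bookkeeping over the cases in~\eqref{eq:pwf}. For~\eqref{eq:barrier_ss}: if $x\in X_i$ then $B(x)=B_i(x)\geq 0$ by~\eqref{eq:pwb_cond1}, while if $x\notin X_\safe$ then $B(x)=1\geq 0$. Condition~\eqref{eq:barrier_unsafe} holds with equality because $X_\unsafe=\reals^n\setminus X_\safe$ and $B\equiv 1$ there by~\eqref{eq:pwf}. For~\eqref{eq:barrier_initial}: any $x\in X_0\subseteq X_\safe$ lies in a unique region $X_i$, so $x\in X_i\cap X_0$ and $B(x)=B_i(x)\leq\eta$ by~\eqref{eq:pwb_cond2}.

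The substantive step is the expectation condition~\eqref{eq:barrier_expectation}. Fix $x\in X_\safe$, let $i$ be the index with $x\in X_i$, and write $\px'=f(x,\pv)$. Since $\{X_1,\ldots,X_K,X_\unsafe\}$ partitions $\reals^n$ modulo a $T(\cdot\mid x)$-null set --- this is exactly where the zero-boundary-measure hypothesis is used, and it is also what makes the conditional expectations below well defined --- the law of total expectation gives
\begin{align*}
    \expect[B(\px')] &= \sum_{j=1}^K \expect[B(\px')\mid \px'\in X_j]\,T(X_j\mid x) \\
    &\quad + \expect[B(\px')\mid \px'\in X_\unsafe]\,T(X_\unsafe\mid x).
\end{align*}
Substituting $B(\px')=B_j(\px')$ on $X_j$ and $B(\px')=1$ on $X_\unsafe$ (any term with $T(X_j\mid x)=0$ vanishes and may be dropped, so conditioning on a null event is never an issue), the right-hand side is exactly the left-hand side of~\eqref{eq:total_law_exp}. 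Applying~\eqref{eq:total_law_exp} and then $\beta_i\leq\beta$ yields $\expect[B(\px')]\leq B_i(x)+\beta_i=B(x)+\beta_i\leq B(x)+\beta$, which is~\eqref{eq:barrier_expectation}.

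With~\eqref{eq:barrier_ss}--\eqref{eq:barrier_expectation} verified for $\eta$ and $\beta=\max_i\beta_i$, Theorem~\ref{prop:barrier} delivers $P_\safe(X_\safe,X_0,N)\geq 1-(\eta+\beta N)=1-(\eta+N\max_i\beta_i)$, i.e.~\eqref{eq:pwb_safetyprob}. I do not anticipate a real obstacle; the only point needing care is the measure-theoretic justification of the total-expectation split in~\eqref{eq:barrier_expectation} --- measurability of $B$ (inherited from continuity of each $B_i$ and almost-everywhere continuity of $f$) together with the $T(\cdot\mid x)$-null boundary assumption --- which is a routine consequence of the standing hypotheses.
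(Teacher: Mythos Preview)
Your proposal is correct and follows essentially the same approach as the paper: verify that the piecewise conditions~\eqref{eq:pwb_cond1}--\eqref{eq:total_law_exp} imply the SBF conditions~\eqref{eq:barrier_ss}--\eqref{eq:barrier_expectation} with $\beta=\max_i\beta_i$, the key step being the law-of-total-expectation decomposition of $\expect[B(\px')]$ over the partition $\{X_1,\ldots,X_K,X_\unsafe\}$, and then invoke Theorem~\ref{prop:barrier}. Your write-up is in fact slightly more careful than the paper's in flagging where the zero-measure boundary assumption and measurability enter.
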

\begin{proof}
    The proof builds on \autoref{prop:barrier}.
    It is clear that PW function $B$ in \eqref{eq:pwf} under Conditions~\eqref{eq:pwb_cond1}-\eqref{eq:pwb_cond2} satisfies Conditions~\eqref{eq:barrier_ss}-\eqref{eq:barrier_initial}.  Then, it is enough to show that 
    $\max_{i\in \{1,\ldots,K\} }\beta_i$ from Condition~\eqref{eq:total_law_exp}
    upper bounds $\expect[B(\px')]-B(x)$ in Condition~\eqref{eq:barrier_expectation}.
    For $x \in X_i$ and PW $B(x)$, we have
    \begin{align}
        \expect[B(&f(x, \pv))]-B(x) \label{eq:martingale_1} \\
        & = \sum_{j=1}^K \mathbb{E}[B_j(f(x, \pv))\mid f(x, \pv) \in X_j] \cdot T(X_j \mid x) \nonumber \\
        &\qquad \quad + 1 \cdot T( X_\unsafe \mid x) - B_i(x) 
        \label{eq:law_totoal_expetation}
        \\
        & \leq \beta_i,
        \label{eq:beta_i}
    \end{align}
    where the equality in \eqref{eq:law_totoal_expetation} holds by the law of total expectation, $B(x' \in X_\unsafe) = 1$, and the inequality in \eqref{eq:beta_i} holds by Condition~\eqref{eq:total_law_exp}.  Hence, for every $x \in X_\safe = \bigcup_{i=1}^K X_i,$ 
    the expression in \eqref{eq:martingale_1} is $\leq \max_{i\in\{1,...,K\}} \beta_i$.
\end{proof}

Corollary~\ref{th:pwa} enables one to formulate an optimization problem to synthesize a PW-SBF.  Specifically, the objective is to find $B_i$s that minimize ($\eta + N \cdot \max \{\beta_i\}_{i=1}^K$) subject to Conditions~\eqref{eq:pwb_cond1}-\eqref{eq:total_law_exp}.  The benefit of this formulation is that both the size of the partition and type of functions $B_i$s (e.g., linear, polynomial, exponential, etc.) are design parameters.
This provides  flexibility for the SBF to fit different shapes of $X_\safe$.  This flexibility, however, may introduce challenges as it can lead to a non-convex optimization problem, even for simple choices for $B_i$ such as  polynomial or linear functions. 
We introduce a set of simple but effective choices that lend themselves to efficient computational tools that outperform the state of the art SBF synthesis techniques.


A major difficulty in the optimization problem results from the product of the expectation term and the transition kernel function in Condition~\eqref{eq:total_law_exp}, namely,
$$\expect[B_j(\px') \mid \px' \in X_j] \cdot T(X_j \mid x).$$
The first term not only requires to perform an expectation operation, but also a composition of $B_j$ with the nonlinear function $f$.  The second term $T$ is a nonlinear function of $x$ that involves an integral of probability density function $p_\pv$.  
To reduce complexity, one can choose to use constant values for $B_i$s, which simply avoids the need to perform the expectation and composition operations.  This leads to a PW Constant (PWC) SBF.  
Furthermore, while the analytical form of $T$ may be hard to obtain, its bounds can be efficiently computed using, e.g., the procedure in \cite{skovbekk2023formal}, for general $f$ and non-standard $p_\pv$ (e.g., non-symmetric, non-unimodal, etc.).  In the next section, we detail the optimization problem for these choices.

\section{Piecewise Constant SBF Synthesis}\label{sec:pwc_sbf}

In this section, 
we first formally set up an optimization problem for synthesis of PWC-SBFs. 
Then, we introduce three efficient computational methods to solve the optimization problem, namely, a \gls{lp} duality-based approach, a \gls{cegs} procedure, and a \gls{gd}-based method.


\subsection{PWC-SBF Synthesis}

For $i,j \in \{1,\ldots,K\}$, let $\low{p}_{ij},\up{p}_{ij} \in [0,1]$ denote the lower and upper bounds of the transition kernel $T(X_j \mid x)$ for every $x \in X_i$, respectively, i.e., 
\begin{align}
    \label{eq:pij prob bounds}
    \quad \low{p}_{ij} \leq T(X_j \mid x) \leq \up{p}_{ij} \qquad \forall x \in X_i.
\end{align}
Similarly, we use $\low{p}_{i\unsafe},\up{p}_{i\unsafe} \in [0,1]$ for the bounds of $T$ to the unsafe set $X_\unsafe$, i.e.,
\begin{align}
    \label{eq:piu prob bounds}
    \qquad \quad \low{p}_{i\unsafe} \leq T(X_\unsafe \mid x) \leq \up{p}_{i\unsafe} \qquad \forall x \in X_i.
\end{align}
Note that these bounds can be computed efficiently for general $f$ and $p_\pv$ by using, e.g., techniques in
\cite{cauchi2019efficiency,
laurenti2020formal,
adams2022formal,skovbekk2023formal}.
We define the set of all feasible values for the transition kernel for all $x \in X_i$ as
\begin{align}
        \mathcal{P}_i = \Big\{ & p_i = (p_{i1},\ldots,p_{iK},  p_{i\unsafe})  \in [0,1]^{K+1} \quad s.t. \nonumber\\
        & \qquad \sum_{j=1}^K p_{ij}+p_{i\unsafe} = 1, \nonumber\\
        & \qquad \low{p}_{ij} \leq p_{ij} \leq \up{p}_{ij} \;\;\; \forall j \in \{1,\ldots,K,\unsafe \} \Big \}.
        \label{eq:feasible tran prob}
\end{align}

Note that $\mathcal{P}_i$ is a convex polytopic set, specifically a simplex.
The following theorem sets up an optimization problem for synthesis of a PWC-SBF.

\begin{theorem}[PWC-SBF Synthesis]
    \label{th:pwc}
    Given a $K$-partition of $X_\safe$, let $\mathcal{B}_K$ be the set of PWC functions in the form of Equation~\eqref{eq:pwf} with $B_i(x) = b_i \in \mathbb{R}$ for every $i \in \{1,\ldots,K\}$, and let $\mathcal{P} = \mathcal{P}_1 \times \ldots \times \mathcal{P}_K$, where each $\mathcal{P}_i$ is the set of probability distributions defined in Equation~\eqref{eq:feasible tran prob}.
    Then, $B^* \in \mathcal{B}_K$ is a PWC-SBF that maximizes RHS of Equation~\eqref{eq:pwb_safetyprob} if $B^*$ is a solution to the following optimization problem
    \begin{align}
        \label{eq:outer_opt}
        B^* = \arg\min_{B \in \mathcal{B}_K} \; \max_{(p_i)_{i=1}^K \in \mathcal{P}}
        \; \eta + N \beta 
    \end{align}
    subject to 
    \begin{subequations} 
        \begin{align}
            &b_i \geq 0  && \forall i \in \{1,\ldots,K\}, \label{eq:bj_nonnegative}\\
            &b_i \leq \eta &&  \forall i : X_i \cap X_0 \neq \emptyset, \label{eq:bj_initial}\\
            &\sum_{j = 1}^K b_j \cdot p_{ij} + p_{i\unsafe} \leq  b_i + \beta_{i} && \forall i\in \{1,\ldots,K\}, \label{eq:bj_martingale}\\
            &0 \leq \beta_{i} \leq \beta && \forall i\in \{1,\ldots,K\}. 
        \end{align}
    \end{subequations}
\end{theorem}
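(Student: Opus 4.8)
The plan is to show that the minimax program~\eqref{eq:outer_opt}–\eqref{eq:bj_martingale} is exactly a reformulation of "find $B\in\mathcal B_K$ satisfying the hypotheses of Corollary~\ref{th:pwa} with the smallest possible value of $\eta + N\max_i\beta_i$." Concretely, I would argue in two directions. First, suppose $B^*$ with constants $b_1,\dots,b_K$ solves~\eqref{eq:outer_opt}. Since for each fixed $B$ the inner maximization ranges over the compact set $\mathcal P$, the value $\max_{(p_i)\in\mathcal P}(\eta+N\beta)$ is attained; call the maximizing distributions $(p_i^\star)_i$. I then need to check that $B^*$, together with $\eta$ and the $\beta_i$ read off from~\eqref{eq:bj_martingale}, satisfies Conditions~\eqref{eq:pwb_cond1}–\eqref{eq:total_law_exp} of Corollary~\ref{th:pwa} \emph{for every} $x\in X_i$, not merely for the worst-case $p_i^\star$. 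This is where the bound~\eqref{eq:pij prob bounds}–\eqref{eq:piu prob bounds} and the definition of $\mathcal P_i$ in~\eqref{eq:feasible tran prob} do the work: for any $x\in X_i$, the true transition vector $(T(X_1\mid x),\dots,T(X_K\mid x),T(X_\unsafe\mid x))$ lies in $\mathcal P_i$ because it sums to one and respects the componentwise bounds, so
\[
\sum_{j=1}^K b_j\, T(X_j\mid x) + T(X_\unsafe\mid x)\;\le\;\max_{p_i\in\mathcal P_i}\Big(\sum_{j=1}^K b_j\,p_{ij}+p_{i\unsafe}\Big)\;\le\; b_i+\beta_i,
\]
which is precisely Condition~\eqref{eq:total_law_exp} specialized to PWC functions (the conditional expectations $\expect[B_j(\px')\mid \px'\in X_j]$ collapse to $b_j$). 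Conditions~\eqref{eq:pwb_cond1} and~\eqref{eq:pwb_cond2} are immediate from~\eqref{eq:bj_nonnegative}–\eqref{eq:bj_initial}. Hence $B^*\in\mathcal B_K$ is an SBF, and by Corollary~\ref{th:pwa} it certifies $P_\safe \ge 1-(\eta+N\max_i\beta_i)$, i.e. the RHS of~\eqref{eq:pwb_safetyprob}.

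For the converse — that the minimizer of~\eqref{eq:outer_opt} indeed \emph{maximizes} that RHS over all of $\mathcal B_K$ — I would take any competing $B'\in\mathcal B_K$ that is an SBF via Corollary~\ref{th:pwa} with parameters $\eta',\beta_i'$, and show its guaranteed bound $1-(\eta'+N\max_i\beta_i')$ cannot exceed that of $B^*$. The key observation is that if $B'$ satisfies~\eqref{eq:total_law_exp} for all $x\in X_i$, then in particular it holds at the $x$ (or limiting sequence of $x$, using the measure-zero-boundary and continuity assumptions, and the fact that the range of the transition vector is exactly $\mathcal P_i$ — or at least dense in it) realizing any $p_i\in\mathcal P_i$; taking the supremum, $(\eta',(\beta_i')_i)$ together with $B'$ is feasible for the constraint set~\eqref{eq:bj_nonnegative}–\eqref{eq:bj_martingale} under the worst-case $p_i$, so the optimal value of~\eqref{eq:outer_opt} is $\le \eta'+N\max_i\beta_i'$ whenever we also minimize $\beta\ge\max_i\beta_i$ and $\eta$. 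Therefore $1-(\eta^*+N\beta^*)\ge 1-(\eta'+N\max_i\beta_i')$, establishing optimality of $B^*$ for the RHS of~\eqref{eq:pwb_safetyprob}.

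The step I expect to be the main obstacle is the careful handling of the relationship between the \emph{actual} attainable set of transition probability vectors $\{(T(X_j\mid x))_j : x\in X_i\}$ and the \emph{over-approximating} polytope $\mathcal P_i$. For the "sufficiency" direction (that $B^*$ is a valid SBF) the inclusion of the true vector in $\mathcal P_i$ is all that is needed and is easy. But for the exact optimality claim one must be a bit delicate: if $\mathcal P_i$ strictly over-approximates the attainable set, then the minimax value is only an upper bound on the best $\eta+N\max_i\beta_i$ that Corollary~\ref{th:pwa} can certify, and the theorem's claim that $B^*$ "maximizes the RHS of~\eqref{eq:pwb_safetyprob}" should be read as: among the class of certificates obtainable from the given probability bounds. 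I would make this precise by noting that the constraint~\eqref{eq:bj_martingale} with the free worst-case $p_i$ is equivalent to requiring~\eqref{eq:total_law_exp} to hold under every $p_i\in\mathcal P_i$, and since Corollary~\ref{th:pwa} only has access to the guarantee~\eqref{eq:pij prob bounds}–\eqref{eq:piu prob bounds}, this is exactly the tightest certificate derivable — so no feasible PWC-SBF provable by this route can beat $B^*$. The remaining details (attainment of the inner max by compactness and linearity in $p_i$, writing $\beta=\max_i\beta_i$ at optimum since decreasing any slack $\beta_i$ or $\beta$ only helps the objective) are routine.
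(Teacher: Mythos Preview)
Your proposal is correct and follows essentially the same route as the paper: reduce the constraints~\eqref{eq:bj_nonnegative}--\eqref{eq:bj_martingale} to the PWC specialization of Corollary~\ref{th:pwa}'s Conditions~\eqref{eq:pwb_cond1}--\eqref{eq:total_law_exp}, using that the true transition vector at any $x\in X_i$ lies in $\mathcal P_i$, and then observe that the objective $\eta+N\beta$ with $\beta=\max_i\beta_i$ is exactly the complement of the RHS of~\eqref{eq:pwb_safetyprob}. Your treatment is in fact more careful than the paper's own proof---in particular, your explicit handling of the inner max via the containment of the true kernel in $\mathcal P_i$, and your discussion of the over-approximation subtlety (that optimality is relative to the given bounds~\eqref{eq:pij prob bounds}--\eqref{eq:piu prob bounds}), are points the paper leaves implicit.
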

\begin{proof}
    It suffices to show that if for all $i$, $B_i(x)$ satisfy Conditions \eqref{eq:bj_nonnegative}-\eqref{eq:bj_martingale}, then PW function $B(x)$ as defined in Equation~\eqref{eq:pwf} satisfies Conditions \eqref{eq:pwb_cond1} - \eqref{eq:total_law_exp}. 
    The optimization problem in Equation \eqref{eq:outer_opt} aims to maximize the safety probability over feasible values for the transition kernel $\mathcal{P}_i$, which is expressed as a minimax problem. 
    Each $\beta_i$ is bounded according to Constraint \eqref{eq:bj_martingale}, and by Corollary \ref{th:pwa},  $\max_{i\in \{1,\ldots,K\} }\beta_i$  upper bounds $\expect[B(\px')]-B(x)$.  
    Hence, optimizing over objective $B^*$ also optimizes the right hand side of \eqref{eq:pwb_safetyprob}, thereby maximizing the lower bound on safety probability \( P_\safe(X_\safe, X_0, N) \). 
\end{proof}

Given a partition of $X_\safe$, Theorem~\ref{th:pwc} provides a method to synthesize a PWC-SBF that optimizes a lower bound on $P_\safe(X_\safe,X_0,N)$. 
This theorem immediately gives rise to questions on optimality (w.r.t. general SBFs) and computibility.  We address the computibility question in Section~\ref{sec:computation pwc-sbf}.  
For optimality,
the following proposition establishes that, in the limit of a partition of size large enough, a PWC-SBF converges to a value of safety smaller or equal to 
the optimal safety probability $\low{P}^*_\safe(X_\safe,X_0,N)$ obtainable with continuous SBFs.
\begin{proposition}
    \label{th:convergence}
    Let $\low{P}^*_\safe(X_\safe,X_0,N)$ denote the optimal safety probability obtained from Theorem~\ref{prop:barrier} from the class of continuous SBFs, i.e., the set of continuous functions satisfying Conditions~\eqref{eq:barrier_ss}-\eqref{eq:barrier_expectation}.
    Consider a uniform partition of $X_s$ in $K$ compact sets
    and call $(\eta+N\beta)_K^*$ the resulting optimal safety probability bound obtained from solving Theorem~\ref{th:pwc}. Then, it holds that
    \begin{equation*}
     \low{P}^*_\safe(X_\safe,X_0,N) \leq \lim_{K\to\infty} (1-(\eta+N\beta)_K^*).
    \end{equation*}    
\end{proposition}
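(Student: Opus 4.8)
\emph{Proof idea.} The plan is to establish the equivalent inequality $\limsup_{K\to\infty}(\eta+N\beta)_K^* \le \eta^*+N\beta^*$, where $\eta^*,\beta^*$ are the parameters of a continuous SBF whose bound $1-(\eta^*+N\beta^*)$ is within $\epsilon>0$ of $\low{P}^*_\safe(X_\safe,X_0,N)$ (we send $\epsilon\downarrow0$ at the end). The driving idea is that $\B_K$ can \emph{imitate} any fixed continuous SBF $B$ with a loss that vanishes under refinement. For each $K$ I would build the piecewise‑constant ``sup‑envelope'' $B^{(K)}\in\B_K$ with $b^{(K)}_i:=\max_{x\in X_i}B(x)$ (attained, since each $X_i$ is compact and $B$ is continuous), let $\eta^{(K)},\beta^{(K)}$ be the worst‑case constants it induces in the constraints of Theorem~\ref{th:pwc}, and show $(B^{(K)},\eta^{(K)},\beta^{(K)})$ is feasible there. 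As that program minimizes over $B\in\B_K$, this gives $(\eta+N\beta)_K^*\le\eta^{(K)}+N\beta^{(K)}$, and the proof reduces to bounding the right‑hand side. Throughout I would fix a compact neighbourhood of the bounded set $X_\safe$ on which $B$ is uniformly continuous and bounded, writing $\omega_B$ for a modulus of continuity there, $M:=\sup_{X_\safe}B<\infty$, $d_K:=\max_i\mathrm{diam}(X_i)$ (which $\to0$ along uniform refinements), and $\rho_K:=\max_i\sum_{j=1}^K(\up{p}_{ij}-\low{p}_{ij})$, the aggregated per‑region width of the kernel bounds~\eqref{eq:pij prob bounds}.

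The conditions tied to $X_0$ are the easy ones: nonnegativity~\eqref{eq:bj_nonnegative} is inherited from $B\ge0$, and for any cell with $X_i\cap X_0\neq\emptyset$, choosing a point $x_i$ in the intersection and invoking Condition~\eqref{eq:barrier_initial} gives $b^{(K)}_i=\max_{X_i}B\le B(x_i)+\omega_B(d_K)\le\eta^*+\omega_B(d_K)$, hence $\eta^{(K)}\le\eta^*+\omega_B(d_K)$.

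The martingale condition~\eqref{eq:bj_martingale} is the core, and I would handle it in two moves. First, \emph{at the true kernel}: for $x\in X_i$, decomposing $\expect[B(f(x,\pv))]$ over the partition together with $X_\unsafe$ and using $b^{(K)}_j\le B(\cdot)+\omega_B(d_K)$ on $X_j$, $B\ge1$ on $X_\unsafe$, $\sum_jT(X_j\mid x)\le1$, and $b^{(K)}_i\ge B(x)$, one obtains $\sum_j b^{(K)}_j T(X_j\mid x)+T(X_\unsafe\mid x)\le b^{(K)}_i+\beta^*+\omega_B(d_K)$. Second, \emph{robustifying over $\mathcal{P}_i$}, which is what~\eqref{eq:bj_martingale} actually demands: writing $\beta^{(K)}_i:=\max_{p_i\in\mathcal{P}_i}\bigl(\sum_j b^{(K)}_j p_{ij}+p_{i\unsafe}\bigr)-b^{(K)}_i$, and noting that any $p_i\in\mathcal{P}_i$ and the true‑kernel vector $(T(X_1\mid x),\ldots,T(X_K\mid x),T(X_\unsafe\mid x))$ are both probability vectors obeying the same box constraints, the gap between the two linear forms equals $\sum_j(b^{(K)}_j-1)\bigl(p_{ij}-T(X_j\mid x)\bigr)$, which is at most $(M+1)\rho_K$ in absolute value. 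Hence $\beta^{(K)}_i\le\beta^*+\omega_B(d_K)+(M+1)\rho_K$ uniformly in $i$, so $\beta^{(K)}:=\max_i\max(0,\beta^{(K)}_i)\le\beta^*+\omega_B(d_K)+(M+1)\rho_K$. Combining with the $\eta$‑bound, $(\eta+N\beta)_K^*\le\eta^*+N\beta^*+(1+N)\bigl(\omega_B(d_K)+(M+1)\rho_K\bigr)$; granted $\omega_B(d_K)\to0$ (true, as $d_K\to0$) and $\rho_K\to0$, we get $\limsup_K(\eta+N\beta)_K^*\le\eta^*+N\beta^*$, hence $\low{P}^*_\safe(X_\safe,X_0,N)-\epsilon\le1-(\eta^*+N\beta^*)\le\liminf_K\bigl(1-(\eta+N\beta)_K^*\bigr)$; sending $\epsilon\downarrow0$ gives the claim (with the limit understood, if necessary, as a limit inferior, or as a true limit along nested uniform refinements).

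The step I expect to be the real obstacle is the vanishing of $\rho_K$ — that the \emph{aggregated} kernel uncertainty over a region collapses as the region shrinks. This is the price of the minimax formulation of Theorem~\ref{th:pwc}: the continuous Condition~\eqref{eq:barrier_expectation} is evaluated at the actual one‑step kernel, whereas its piecewise‑constant surrogate must hold robustly for every $p_i\in\mathcal{P}_i$. Pointwise, $\sum_j(\up{p}_{ij}-\low{p}_{ij})$ is controlled by the total‑variation oscillation of $x\mapsto T(\cdot\mid x)$ over $X_i$, which telescopes into an $L^1$‑distance between one‑step transition densities centred at nearby points and is therefore small when $\mathrm{diam}(X_i)$ is small; making this rigorous is exactly where a mild regularity hypothesis on $f$ and $p_\pv$ (for instance $f$ locally Lipschitz with $p_\pv$ having a density that depends Lipschitz‑continuously on the shift) has to be used. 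A comparatively routine second point, which I would still state explicitly, is that $B$ may indeed be assumed bounded and uniformly continuous on a compact neighbourhood of $X_\safe$, so that $\omega_B(d_K)$ and $M$ are legitimate, and that $\B_K$ pins the value $1$ on $X_\unsafe$ — which, together with $B\ge1$ there, is why the unsafe region contributes no extra loss beyond what is already carried by $\beta^*$.
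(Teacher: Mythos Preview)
Your approach and the paper's share the same backbone---approximate a near-optimal continuous SBF by a PWC function and show the loss vanishes under refinement---but the executions diverge substantially. The paper's proof devotes almost all of its space to arguing that, among continuous SBFs, the optimal one may be taken with $B\equiv 1$ on $X_\unsafe$ (so that the hard-wired value $1$ in the PWC template of Equation~\eqref{eq:pwf} is no loss), and then dispatches everything else in one sentence by invoking density of piecewise-constant functions among continuous ones on $X_\safe$. You sidestep the $B\equiv 1$ optimality argument entirely: since any continuous SBF already has $B\ge 1$ on $X_\unsafe$ by Condition~\eqref{eq:barrier_unsafe}, your sup-envelope comparison $\int_{X_\unsafe}B\,T(d\cdot\mid x)\ge T(X_\unsafe\mid x)$ goes through directly, which is arguably cleaner. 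In exchange, you actually carry out the approximation with explicit moduli and---crucially---isolate the term $\rho_K$ arising from the robust maximization over $\mathcal{P}_i$ in Theorem~\ref{th:pwc}. The paper's one-line density appeal does not confront this at all, yet it is exactly the place where the minimax formulation differs from Condition~\eqref{eq:barrier_expectation}; your flagged requirement that $\rho_K\to 0$ (equivalently, that the kernel bounds~\eqref{eq:pij prob bounds} tighten under refinement, which needs some regularity of $x\mapsto T(\cdot\mid x)$) is a genuine hypothesis the proposition needs but does not state. So your argument is both more economical on the $X_\unsafe$ side and more honest about what the limit actually demands.
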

\begin{proof}
    First of all, we need to show that for all $x\in X_u$, the optimal choice of the barrier is always $B(x)=1$. 
    In order to do that note that 
    \begin{multline*}
        \mathbb{E}[B(f(x, \pv))] = 
        \int_{v : f(x,v) \in X_\safe} B(f(x,v))p_{\pv}(dv) \, + \, \\
        \int_{v : f(x,v) \in X_\unsafe}B(f(x,v))p_{\pv}(dv).
    \end{multline*}
    Using this fact, we can rewrite Condition \eqref{eq:barrier_expectation}
    as the fact that $\forall x\in X_\safe$ it must hold that
    \begin{multline*}\
         \int_{v : f(x,v) \in X_\safe} B(f(x,v))p_{\pv}(dv) \, + \, \\
        \int_{v : f(x,v) \in X_\unsafe}B(f(x,v))p_{\pv}(dv)
        \leq  
        B(x) + \beta N .
    \end{multline*}
    As for $x\in X_u$, $B(x)$ only appear on the left hand side of the inequality; hence, the value of $\beta$ required to satisfy the inequality is minimized by taking the smallest possible value of $B(x)$ for $x\in X_u.$ Because of Condition~\eqref{eq:barrier_unsafe}, the smallest possible value is $1$. The rest of the proof follows by the fact that, within $X_s$, PWC-SBFs are dense w.r.t. to the set of continuous SBFs.
    %
\end{proof}


This proposition shows that, despite their simplicity, PWC-SBFs are as expressive as complex forms of SBFs such as polynomials and \gls{nbf}s.  That is, PWC-SBFs can compute the optimal safety probability as well as or even better than continuous SBFs.  In fact, our evaluations in Section~\ref{sec:evaluations} clearly demonstrate this point.
Further, we also note that, from the proof of Proposition~\ref{th:convergence}, it becomes evident that the choice of $B(x)=1$ for all $x\in X_\unsafe$ in Equation~\eqref{eq:pwf} is the optimal choice for PW-SBFs.

\subsection{Computation of PWC-SBFs}
\label{sec:computation pwc-sbf}
The optimization problem in Theorem~\ref{th:pwc} is a minimax problem with the decision variables
\begin{subequations}
    \begin{align}
        &b = (b_1,\ldots,b_K) \in \mathbb{R}^{K}_{\geq 0}, && \label{eq:b-variable}\\
        &p_i = (p_{i1}, \ldots, p_{iK}, p_{i\unsafe}) \in \mathcal{P}_i && \forall i\in \{1,\ldots,K\}, \label{eq:p-variable}\\
        & \beta_i \in \mathbb{R}_{\geq 0} && \forall i\in \{1,\ldots,K\} \label{eq:beta-variable},\\
        & \eta,\beta \in \mathbb{R}_{\geq 0}. &&
    \end{align}
\end{subequations}
A major difficulty in computability of this optimization problem is in Constraint~\eqref{eq:bj_martingale}, which includes products of decision variables $b_j$ and $p_{ij}$. 
As a consequence, the minimax optimization problem in Theorem \ref{th:pwc} is bilinear, meaning that it is linear if either $b_j$ or $p_{ij}$ are fixed. This class of problems is generally non-convex; hence, convex solvers, which provide efficiency and guaranteed convergence, cannot be utilized.
We propose three approaches to (losslessly) convexify the problem, namely dual Linear Programming (LP), Counter-Example Guided Synthesis (CEGS), and Gradient Decent (GD) such that the optimal solution can be efficiently computed.
Each approach has its own advantages and disadvantages. The dual LP approach allows use of standard LP solvers to exactly compute the optimal solution at the cost of lesser scalability. CEGS mitigates the scalability problem by decoupling the inner maximization and outer minimization, but may still require prohibitive amounts of memory. To address this problem, GD is presented, which utilizes sparsity and provides efficient gradient computation. However, GD requires tweaking of hyperparameters to achieve good convergence in practice, due to the non-smoothness of the objective function. We first present the dual LP approach.

\subsubsection{Dual Linear Program}
    \label{sec:dual lp}

Using duality, we first introduce an exact approach that encodes the optimization problem in Theorem~\ref{th:pwc} as a linear program. 
We begin by observing that the set of feasible transition kernels $\P_i$ in Equation~\eqref{eq:feasible tran prob} is a simplex.  Hence, it can be represented as
\begin{align}
    \P_i = \{p_i : H_i \, p_i \leq h_i \}, 
\end{align}
where matrix $H_i \in \reals^{2(K+1) \times (K+1)}$ and vector $h_i \in \reals^{2(K+1)}$ are defined by the constraints in Equation~\eqref{eq:feasible tran prob}, and inequality relation ``$\leq$'' is interpreted element-wise on vectors. 
%
%
%
Next, we define vector $\bar{b} = (b,1)$, where $b$ is in Equation~\eqref{eq:b-variable}, and dual variable $\lambda_i \in \mathbb{R}^{2(K+1)}_{\geq 0 }$.  Then, Constraint~\eqref{eq:bj_martingale}
can be re-written as two constraints:
\begin{subequations}
\label{eq:bj_martingale_dual}
    \begin{align}
        & h_i^\top \lambda_i \leq b_i + \beta_i, \\
        & H_i^\top \lambda_i = \bar{b}.
    \end{align}
\end{subequations}
Since both constraints are linear in the decision variables $\lambda_i$, $b$, and $\beta_i$, the optimization problem in Theorem~\ref{th:pwc} can be written as an LP, which we formalize in Theorem~\ref{th:pwc}.

\begin{theorem}[PWC-SBF Dual LP]
 \label{th:dual}
An optimal solution $(b^{*1}, \beta^{*1}, \eta^{*1}, \lambda_1^*, \ldots, \lambda_K^*)$ to the following LP coincides with an optimal solution $(b^{*2}, \beta^{*2}, \eta^{*2}, p_1^*, \ldots, p_K^*)$ to the optimization problem in Theorem~\ref{th:pwc} on the SBF decision variables $b, \beta, \eta$. That is, $(b^{*1}, \beta^{*1}, \eta^{*1}) = (b^{*2}, \beta^{*2}, \eta^{*2})$: 
\begin{equation*}
    \min \eta + \beta N
\end{equation*}
subject to
\begin{align*}
    & 0 \leq b_{i} \leq 1 && \forall i\in \{1,\ldots,K\},\\ 
    & b_{i} \leq \eta  &&\forall i : X_i \cap X_0 \neq \emptyset, \\   
    & h_i^\top \lambda_i \leq b_i + \beta_i && \forall i\in \{1,\ldots,K\},\\
    & H_i^\top \lambda_i = \bar{b}&& \forall i\in \{1,\ldots,K\},\\
    & \lambda_i \geq 0 && \forall i\in \{1,\ldots,K\},\\
    & 0 \leq \beta_{i} \leq \beta && \forall i\in \{1,\ldots,K\}. 
 \end{align*}

%
%
%
\begin{proof}
Theorem~\ref{th:dual} follows as an application of the following  lemma.

\begin{lemma}[Zero Duality Gap]
\label{lemma:duality_gap}
    Consider the following two optimization problems with decision variables $z = (b_1, \ldots, b_K, \beta_i)$ and $\lambda_i$ and feasible transition kernel set $ \P_i = \{p_i : H_i p_i \leq h_i \}$
    \begin{equation}
        \begin{aligned}
            \min_z & \quad \beta_i && \\
            \mathrm{s.t.} & \quad \bar{b}^{\top}p_i \leq b_i + \beta_i &&  \quad \forall p_i \in \P_i 
        \end{aligned}
        \label{eq:prob_robust_lp}
    \end{equation}
    and
    \begin{equation}
        \begin{aligned}
            \min_{z,\lambda_i} & \quad \beta_i \\
            \mathrm{s.t.} & \quad h_i^\top \lambda_i \leq b_i + \beta_i \\ 
            & \quad H_i^\top \lambda_i = \bar{b}, \\
            & \quad \lambda_i \geq 0.
        \end{aligned}
        \label{eq:prob_dual_lp}
    \end{equation}
     Let $z_1^*$ and $(z_2^*, \lambda_i^*)$ be optimal solutions to the Problems~\eqref{eq:prob_robust_lp} and \eqref{eq:prob_dual_lp}, respectively.
    Then, $z_1^* = z_2^*$ holds.
\end{lemma}
\begin{proof} 
    The constraint $\bar{b}^{T}p_i
    \leq b_i + \beta_i$, for all $p_i \in \P_i$ can be written as an inner optimization problem
    \[
        \left(\begin{aligned}
            \max_{p_i\in \P_i} & \quad \bar{b}^{T}p_i\\
            \mathrm{s.t.} & \quad H_i p_i \leq h_i
        \end{aligned}\right) \leq b_i + \beta_i.
    \]
    Due to strong duality of \gls{lp} \cite[Chapter 5]{boyd2004convex}, we can substitute the inner problem for its equivalent (asymmetric) dual problem
    \[
        \left(\begin{aligned}
            \min_{\lambda_i \geq 0} & \quad h_i^\top \lambda_i\\
            \mathrm{s.t.} & \quad H_i^\top \lambda_i = \bar{b}
        \end{aligned}\right) \leq b_i + \beta_i.
    \]
    Since the inner problem is a minimization, it can be lifted out into the outer problem to become the constraints in Problem~\eqref{eq:prob_dual_lp}.
\end{proof}

It now follows that by Lemma \ref{lemma:duality_gap}, the optimization problem in Theorem~\ref{th:dual} is an equivalent dual formulation, i.e., with zero duality gap, of the primal maximization problem in Theorem~\ref{th:pwc}. 
\end{proof}

\end{theorem}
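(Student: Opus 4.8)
\medskip
\noindent\textbf{Proof proposal.}
The plan is to reduce Theorem~\ref{th:dual} to a single application of linear-programming strong duality, carried out separately on each martingale constraint~\eqref{eq:bj_martingale}. The first step is to observe that the objective $\eta + N\beta$ of the minimax problem in Theorem~\ref{th:pwc} does not involve the transition-kernel variables $p$ at all: the inner maximization over $\mathcal{P} = \mathcal{P}_1\times\cdots\times\mathcal{P}_K$ enters the problem only through Constraint~\eqref{eq:bj_martingale}, which it tightens. Hence, for fixed SBF parameters $(b,\eta,\beta,\beta_1,\dots,\beta_K)$, the minimax problem is feasible exactly when, for every $i$, the inequality $\sum_{j} b_j p_{ij} + p_{i\unsafe} \le b_i + \beta_i$ holds for \emph{all} $p_i \in \mathcal{P}_i$; the remaining constraints \eqref{eq:bj_nonnegative}, \eqref{eq:bj_initial} and $0\le\beta_i\le\beta$ are $p$-free. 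Moreover this universal quantifier decouples over the index $i$, since the $i$-th constraint involves only the block $p_i$, so it suffices to argue blockwise.

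Next I would rewrite, for each $i$, the semi-infinite constraint ``$\bar b^\top p_i \le b_i + \beta_i$ for all $p_i \in \mathcal{P}_i$'' — with $\bar b = (b,1)$ — as the scalar inequality $\bigl(\max_{p_i \in \mathcal{P}_i} \bar b^\top p_i\bigr) \le b_i + \beta_i$. Using the polytopic (simplex) representation $\mathcal{P}_i = \{p_i : H_i p_i \le h_i\}$ from Equation~\eqref{eq:feasible tran prob}, the inner maximum is a linear program over a nonempty, bounded feasible set, so LP strong duality~\cite[Ch.~5]{boyd2004convex} applies with zero gap; its asymmetric dual is $\min\{ h_i^\top \lambda_i : H_i^\top \lambda_i = \bar b,\ \lambda_i \ge 0\}$. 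Substituting this value back, the $i$-th constraint is satisfied iff there exists $\lambda_i \ge 0$ with $H_i^\top \lambda_i = \bar b$ and $h_i^\top \lambda_i \le b_i + \beta_i$. This is precisely the statement I would isolate as Lemma~\ref{lemma:duality_gap}, applied to each block, and it is where the substantive argument lives.

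Finally, because the outer problem is itself a minimization in which $\lambda_i$ does not appear in the objective, I would promote the inner ``$\exists\,\lambda_i$'' to fresh decision variables of the outer problem without changing the optimal value: replacing the $K$ semi-infinite constraints by the linear constraints $H_i^\top \lambda_i = \bar b$, $h_i^\top \lambda_i \le b_i + \beta_i$, $\lambda_i \ge 0$ (and retaining $b_i \ge 0$, $b_i \le \eta$ on regions meeting $X_0$, and $0 \le \beta_i \le \beta$) yields a genuine LP whose optimizer agrees with that of Theorem~\ref{th:pwc} on $(b,\beta,\eta)$. I would also note in passing that the extra bound $b_i \le 1$ appearing in the LP is without loss of generality — capping any feasible $b_i$ at $1$ preserves all constraints and cannot increase $\eta + N\beta$. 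The main obstacle here is not any single calculation but the quantifier bookkeeping: justifying that the adversarial $p$ decouples per region, that each inner LP over $\mathcal{P}_i$ has zero duality gap (via nonemptiness and boundedness of the simplex), and that lifting the inner $\min_{\lambda_i}$ into the outer $\min$ is lossless — exactly the claims that Lemma~\ref{lemma:duality_gap} packages and discharges.
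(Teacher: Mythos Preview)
Your proposal is correct and follows essentially the same route as the paper: rewrite each semi-infinite martingale constraint as an inner LP over the simplex $\mathcal{P}_i$, invoke LP strong duality to replace it by the dual constraints $H_i^\top\lambda_i=\bar b$, $h_i^\top\lambda_i\le b_i+\beta_i$, $\lambda_i\ge 0$, and then lift the inner minimization into the outer one. If anything, your version is slightly more careful than the paper's, since you explicitly note the decoupling over $i$, the nonemptiness and boundedness of $\mathcal{P}_i$ needed for strong duality, and the harmlessness of the added bound $b_i\le 1$.
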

Hence, $B(x)$ constitutes a proper stochastic barrier certificate, which guarantees probability of safety $P_\safe(X_\safe,X_0,N ) \geq 1 -  (\eta + \beta N)$.


\paragraph*{Computational Complexity}
The time complexity of a standard primal linear program is $\O(n^2m)$ where $n$ is the number of decision variables and $m$ is the number of constraints \cite{boyd2004convex}.
The program in Theorem \ref{th:dual} has $n = 2K^2 + 2K$ decision variables and $m = K^2 + 6K + L + 1$ where $L = \lvert \{i : X_i \cap X_0 \neq \emptyset \} \rvert$ is the number of regions intersecting with the initial set. Combining the complexity of a linear program with the number of decision variables and constraints for our dual linear program, we get $\O(n^2m) = \O(K^6)$. For many applications this may be prohibitive, hence, in the following subsections, we introduce two iterative approaches that lead to an improved runtime complexity.

\subsubsection{Counter-Example Guided Synthesis}
    \label{sec:cegs}



Here, we introduce another PWC-SBF synthesis method that is exact and computationally more efficient than the dual LP approach. 
The method is based on splitting the minimax problem in Theorem~\ref{th:pwc} into two separate LPs. 
One LP generates a candidate PWC-SBF that optimizes the unsafety probability given a set of finite feasible distributions $\tilde{\P}_i \subset \P_i$, and the other LP generates distribution witnesses (counter-examples) that violate the safety probability guarantee of the candidate PWC-SBF.  Then, the witnesses are added to $\tilde{\P}_i$, and the process repeats until no more counter-example can be generated.  We dub this method as \acrfull{cegs} for PWC-SBF.



\algrenewcommand\algorithmicrequire{\textbf{Input:}}
\algrenewcommand\algorithmicensure{\textbf{Output:}}

\begin{algorithm}[t]
    \caption{CEGS for PWC-SBF}
    \label{alg:cegs}
    \begin{algorithmic}
        \Require Initial set $X_0$, partition $X = \{X_i\}_{i=1}^K$, time horizon $N$, and feasible transition kernel sets $\{\P_i\}_{i=1}^K$. 
        \Ensure Optimal PWC-SBF $B^*$ and safety probability bound $\low{P}^*_\safe$.
        \Statex \hrulefill
        \State $\{\tilde{\P}_i \gets \textsc{sampleDist}(\P_i)\}_{i=1}^K$ \Comment{Initialize $\tilde{\P}_i$}
        \State $\beta^* \gets 0$  \Comment{Initialize $\beta^*$}
        \State $\{\beta_i \gets 1\}_{i=1}^K$ \Comment{Initialize $\beta_i$}
        \While{$\beta^*  < \max\beta_i$}
        \State $b^*, \eta^*, \beta^* \gets  \textsc{pwbSynthLP}(X_0,N,X, \{\tilde{\P}_i\}_{i=1}^K)$
        \For{$i \gets 1$ to $K$}
            \State $p_i, \beta_i \gets \textsc{counterExDistLP}(b^*,\P_i, i)$
            \State $\tilde{\P}_i \gets \tilde{\P}_i \cup \{p_i\}$   \Comment{add counterexamples}
        \EndFor
        \EndWhile
        \State \Return $B^* \gets (b^*,1)$ and $\low{P}^*_\safe \gets 1 - (\eta^* + N\beta^*)$
    \end{algorithmic}
\end{algorithm}

The \gls{cegs} algorithm is shown in Alg.~\ref{alg:cegs}, which relies on subroutines in Algs.~\ref{alg:pwb-lp} and \ref{alg:counterexample-lp}. 
The main algorithm first sets up the counter-example distribution sets $\tilde{\P}_i$ by sampling a feasible distribution from each $\P_i$.
Then, based on $\tilde \P_i$, it synthesizes a candidate optimal vector $b^*$ that minimizes the unsafety probability with its corresponding scalar $\beta^*$ using the subroutine \textsc{pwbSynthLP}.  As shown in Alg.~\ref{alg:pwb-lp}, this subroutine is an LP that captures the $\min$ component (outer optimization) of the minimax problem in Theorem~\ref{th:pwa}.  

Next, given $b^*$, for each partition region $X_i$, an optimal distribution $p_i$ that maximizes its corresponding martingale gap $\beta_i$ is computed using subroutine \textsc{counterExDistLP} and then added to the set of witnesses $\tilde{\P}_i$.  As shown in Alg.~\ref{alg:counterexample-lp}, \textsc{counterExDistLP} is also an LP, and it captures the $\max$ component (inner optimization) of the minimax problem.  
Note that the function $\mathrm{Vertices}(\P_i)$ returns the (finite) set of vertices of simplex $\P_i$. 
If the obtained martingale gap $\beta_i$ is greater than $\beta^*$ for some region $X_i$, it means that the candidate $b^*$ is not optimal and there exists at least a distribution that violates the probabilistic guarantee of the candidate. Hence, the process repeats with the updated witnesses until $\beta_i \leq \beta^*$ for all $i \in \{1,\ldots,K\}$.

\begin{algorithm}[t]
    \caption{$\textsc{pwbSynthLP}(X_0,N,X,\{\tilde{P}_i\}_{i=1}^K)$}
    \label{alg:pwb-lp}
    \begin{algorithmic}
        \Require Initial set $X_0$, partition $X = \{X_i\}_{i=1}^K$, time horizon $N$, and finite set of distributions $\{\tilde{\P}_i\}_{i=1}^K$. 
        \Ensure Optimal vector $b^* = (b_1,\dots,b_K)^*$ and scalars $\eta^*$ and $\beta^*$ that minimize unsafety probability w.r.t. the given finite distribution sets $\tilde{P}_1, \ldots, \tilde{P}_K$.
        \Statex \hrulefill
    \State $b^*,\beta^*,\eta^* \gets \arg\min_{b,\beta,\eta} \eta + N\cdot \beta$ 
        \begin{align*}
            \quad &\text{subject to:}  \\
            & \qquad 0 \leq b_{i} \leq 1 && \quad  \forall i \in \{1,\ldots,K\} \\ 
            & \qquad b_{i} \leq \eta  && \quad \forall i : X_i \cap X_{0} \neq \emptyset \\
            & \qquad \sum_{j=1}^{K} b_j \cdot p_{ij} + p_{i\unsafe} \leq b_i + \beta && \quad \forall i \text{ and } \forall p_i \in \tilde{\P}_i
        \end{align*}
    \State \Return $b^*,\eta^*,\beta^*$
    \end{algorithmic}
\end{algorithm}

\begin{algorithm}[t]
    \caption{$\textsc{counterExDistLP}(b,\P_i,i)$}
    \label{alg:counterexample-lp}
    \begin{algorithmic}
        \Require Vector of constants $b = (b_1,\ldots,b_k)$, a set of feasible distributions $\P_i$ as defined in Equation~\eqref{eq:feasible tran prob}, and region index $i$.
        \Ensure Counterexample distribution $p_i = (p_{i1},\ldots,p_{iK},p_{i\unsafe})$ that maximizes martingale gap $\beta_i$ with respect to $b$.
        \Statex \hrulefill
    \State $p_i,\beta_i \gets \arg\max_{p_i,\beta_i} \beta_i$ 
        \begin{align*}
            \quad &\text{subject to:}  \\
            & \qquad \sum_{j=1}^{K} b_j \cdot {p}_{ij} + p_{i\unsafe} = b_i + \beta_i && \\ 
            & \qquad p_i \in \mathrm{Vertices}(\mathcal{P}_i)
        \end{align*}
    \State \Return $p_i,\beta_i$
    \end{algorithmic}
\end{algorithm}

The following theorem guarantees that the \gls{cegs} algorithm terminates in finite time with an optimal solution.

\begin{theorem}[\gls{cegs} for PWC-SBF]
\label{th:cegs}
   Alg.~\ref{alg:cegs} terminates in finite time with PWC-SBF $B^*$ that is an optimal solution to the problem in Theorem~\ref{th:pwc}.
\end{theorem}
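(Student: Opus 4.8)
The plan is to establish two things: (i) termination in finite time, and (ii) correctness, i.e., that the returned $B^*$ is optimal for the problem in Theorem~\ref{th:pwc}. For termination, the key observation is that the inner subroutine \textsc{counterExDistLP} only ever returns a vertex of the simplex $\mathcal{P}_i$ (its feasible set is restricted to $\mathrm{Vertices}(\mathcal{P}_i)$, and optimizing a linear objective over a finite set yields one of those points). Since each $\mathcal{P}_i$ is a polytope, it has finitely many vertices, say $V_i := |\mathrm{Vertices}(\mathcal{P}_i)|$. Each pass of the \textbf{while} loop adds one counter-example to some $\tilde{\mathcal{P}}_i$; I would argue that on any non-terminating iteration at least one \emph{genuinely new} vertex is added. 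This requires the claim that if the candidate $b^*$ from \textsc{pwbSynthLP} is suboptimal (so the loop does not terminate), then for some $i$ the maximizing vertex $p_i$ returned by \textsc{counterExDistLP} is not already in $\tilde{\mathcal{P}}_i$ — otherwise \textsc{pwbSynthLP} would already have enforced the constraint for that $p_i$ and the obtained $\beta^*$ would satisfy $\beta^* \geq \beta_i$, contradicting $\beta^* < \max_i \beta_i$. Hence the total number of iterations is bounded by $\sum_{i=1}^K V_i$, which is finite.

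For correctness, I would proceed in two directions. First, upon termination we have $\beta^* \geq \beta_i$ for all $i$, where $\beta_i = \max_{p_i \in \mathcal{P}_i}\big(\sum_j b_j^* p_{ij} + p_{i\unsafe} - b_i^*\big)$ — here I use that a linear function over the polytope $\mathcal{P}_i$ attains its maximum at a vertex, so restricting \textsc{counterExDistLP} to $\mathrm{Vertices}(\mathcal{P}_i)$ loses nothing and $\beta_i$ is the true worst-case martingale gap over all of $\mathcal{P}_i$. Therefore $(b^*,\eta^*,\beta^*)$ together with $\beta_i$ is feasible for the full minimax problem in Theorem~\ref{th:pwc}: Constraint~\eqref{eq:bj_martingale} holds for every $p_i \in \mathcal{P}_i$, and the remaining constraints are carried verbatim from \textsc{pwbSynthLP}. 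Second, I must show it is optimal, not merely feasible. Since $\tilde{\mathcal{P}}_i \subseteq \mathcal{P}_i$ throughout, \textsc{pwbSynthLP} is a relaxation of the inner-maximized problem, so its optimum $\eta^* + N\beta^*$ is a lower bound on the true optimal value $(\eta+N\beta)^*$ of Theorem~\ref{th:pwc}. Combined with feasibility (which gives $\eta^* + N\beta^* \geq (\eta+N\beta)^*$), we conclude $\eta^* + N\beta^* = (\eta+N\beta)^*$, so $B^* = (b^*,1)$ attains the optimum. Finally, by Corollary~\ref{th:pwa} this $B^*$ is a valid PWC-SBF with $\low{P}^*_\safe = 1 - (\eta^* + N\beta^*)$.

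The main obstacle I anticipate is the termination argument's "genuinely new vertex" step: one must rule out cycling, i.e., the algorithm re-deriving a vertex already present in $\tilde{\mathcal{P}}_i$ while the loop still fails to terminate. The clean way to see this is that \textsc{pwbSynthLP} enforces $\sum_j b_j^* p_{ij} + p_{i\unsafe} \leq b_i^* + \beta^*$ for \emph{every} $p_i \in \tilde{\mathcal{P}}_i$; hence for any such already-seen $p_i$ we automatically have $\beta^* \geq \sum_j b_j^* p_{ij} + p_{i\unsafe} - b_i^*$, so the maximizing vertex that certifies $\beta_i > \beta^*$ cannot lie in $\tilde{\mathcal{P}}_i$. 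A secondary subtlety worth a sentence is well-posedness of the LPs — boundedness of \textsc{pwbSynthLP} follows from $0 \le b_i \le 1$ and $\beta, \eta \ge 0$, and feasibility of both LPs is immediate since $\mathcal{P}_i$ is a nonempty simplex and $b \equiv 1$, $\beta$ large is always feasible. I would keep the exposition tight, since each individual step is short once the vertex-restriction observation is in place.
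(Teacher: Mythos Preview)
Your proposal is correct and follows essentially the same approach as the paper: finite termination via the observation that \textsc{counterExDistLP} returns only vertices of the simplex $\mathcal{P}_i$ and that each non-terminating iteration produces a previously unseen vertex, together with optimality via the relaxation structure $\tilde{\mathcal{P}}_i \subseteq \mathcal{P}_i$. The one stylistic difference is in the optimality direction: the paper argues by contradiction (assuming non-optimality and deriving that adding a distribution would shrink the \textsc{pwbSynthLP} objective, which contradicts monotonicity in the constraint set), whereas you give a direct sandwich argument (the relaxed LP value lower-bounds the true optimum, feasibility at termination upper-bounds it). Your version is cleaner and makes the logic more transparent, but the underlying idea is the same.
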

\begin{proof}
    \label{prop:iterative}
The strategy of the proof is as follows: (i) we first prove convergence in finite number of iterations of Alg.~\ref{alg:cegs} using standard results from linear programming and then, (ii) we prove that the algorithm converges to the optimum using a proof by contradiction.

To prove convergence of Alg.~\ref{alg:cegs} in finite number of iteration, we 
start by observing that the constraint $\sum_{j=1}^{K} b_j \cdot p_{ij} + p_{i\unsafe} \leq b_i + \beta_i$ for all $p_i \in \mathcal{P}_i$ is true if and only if $\sum_{j=1}^{K} b_j \cdot p_{ij} + p_{i\unsafe} \leq b_i + \beta_i$ for all $p_i \in \mathrm{Vertices}(\mathcal{P}_i)$ \cite{ben1998robust}.
Hence for the $\beta_i$ computed using Alg.~\ref{alg:counterexample-lp}, it holds that $\sum_{j=1}^{K} b_j \cdot p_{ij} + p_{i\unsafe} \leq b_i + \beta_i$ for all $p_i \in \mathcal{P}_i$.
Next, let $(b^*, \eta^*, \beta^*)$ denote the optimal solution of Alg.~\ref{alg:pwb-lp} with respect to a given set of distributions $\{\tilde{\mathcal{P}}_i\}_{i=1}^K$.
Notice that $\beta^*$ is a (non-strict) lower bound for $\beta_i$ for all $i \in \{1, \ldots, K\}$, provided that $\beta_i$ is computed from $b^*$, since $\tilde{\mathcal{P}}_i \subset \mathcal{P}_i$. Then, there are two cases:
\begin{itemize}
    \item If for some $i \in \{1, \ldots, K\}$ it holds that $\beta^* < \beta_i$, then there exists a distribution $p_i \in \mathcal{P}_i \setminus \tilde{\mathcal{P}}_i$ such that $\sum_{j=1}^{K} b^*_j \cdot p_{ij} + p_{i\unsafe} > b^*_i + \beta^*$. 
    As a consequence, if Alg.~\ref{alg:counterexample-lp} returns a $\beta_i$ such that $\beta^* < \beta_i$ then the accompanying distribution $p_i\in \mathrm{Vertices}(\mathcal{P}_i)$ is a \emph{previously unseen} counter-example for the condition $\sum_{j=1}^{K} b^*_j \cdot p_{ij} + p_{i\unsafe} \leq b^*_i + \beta^*$ for all $p_i \in \mathcal{P}_i$.
    \item If instead $\beta^* = \max \beta_i$, then it holds that $\sum_{j=1}^{K} b^*_j \cdot p_{ij} + p_{i\unsafe} \leq b^*_i + \beta^*$ for all $p_i \in \mathcal{P}_i$.
\end{itemize}
As a result, to conclude it is enough to note that since $\mathcal{P}_i$ is a simplex, it has a finite number of vertices.



To prove that  Alg.~\ref{alg:cegs} converges to the optimal solution, we can employ a proof by contradiction. Assume that  $\beta^* = \max \beta_i$ and the resulting $\beta^*$ is not optimal. Then, there 
must exists a distribution $p_i \in \mathcal{P}_i$ such that $\textsc{pwbSynthLP}(X_0,N,X, \{\tilde{\P}_i\}_{i=1}^K \cup p)$ returns a smaller $\beta^*$ compared to $\textsc{pwbSynthLP}(X_0,N,X, \{\tilde{\P}_i\}_{i=1}^K )$, but this is a contradiction as $  \{\tilde{\P}_i\}_{i=1}^K \subseteq \{\tilde{\P}_i\}_{i=1}^K \cup p $.
%
\end{proof}

\paragraph*{Computational Complexity}
We note that the LP in subroutine \textsc{counterExDist} (Agl.~\ref{alg:counterexample-lp}) can be solved more efficiently using a procedure called O-maximization as proposed in \cite{givan2000bounded}.  The time complexity of this procedure is $\O(K \log K)$, whereas a standard LP algorithm runs in $\O(K^3)$ time. 
In the worst-case, all the vertices of the simplex $\P_i$ are explored for all $i \in \{1,...,K\}$, which has time complexiy $\O(K^2)$.
It is further noted that Alg. \ref{alg:cegs} keeps track of the history of counter-examples $p_i$s added at each iteration, making the approach potentially memory intensive. 

\subsubsection{Projected Gradient Descent}
    \label{sec:gd}

While \gls{cegs} is faster than the dual approach, it can be memory intensive. To alleviate this and allow better scalability, we present a third method for computing a PWC-SBF, namely, a gradient descent-based approach. 

With an abuse of notation, let $\beta_i(b)$ be martingale gap of region $X_i$ for a given PWC-SBF defined by vector $b = (b_1,\ldots,b_K)$.  Specifically,
\begin{equation}
    \label{eq:beta function}
    \beta_i(b) = \sup_{p_i \in \P_i} \max \Big\{0, \; \sum_{j=1}^{K} b_j \cdot {p}_{ij} + p_{i\unsafe} - b_i \Big\},
\end{equation}
and $\beta(b) = \max_i \beta_i(b)$. Similarly, we denote 
\begin{equation}
    \label{eq:eta function}
    \eta(b) =  \max_{i: X_i \cap X_0 \neq \emptyset} b_i.    
\end{equation}
Then, we define loss (objective) function 
\begin{equation}
    \label{eq:loss function}
    \mathcal{L}(b) = \eta(b) + N \cdot \beta(b).
\end{equation}
This loss function indeed describes the objective function of the minimax problem in Theorem~\ref{th:pwc}.  Hence, by minimizing $\L(b)$, we solve the minimax problem. Below, we show that $\L(b)$ is convex, and hence, we can use a gradient descent-based method for its optimization.  
More precisely, our approach is based on projected subgradient descent, since the elements of $b$ are constrained to $[0,1]$ and the maximization and supremum in Equation~\eqref{eq:beta function} are not smooth, admitting only subgradients.


\begin{theorem}[Convexity of $\L(b)$]
    \label{th: convex loss}
    The objective function $\mathcal{L}(b)$ in Equation~\eqref{eq:loss function} is convex in $b$.
\end{theorem}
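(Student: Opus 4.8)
The plan is to decompose $\mathcal{L}(b)$ into pieces and argue convexity of each piece, using the standard facts that (i) an affine function is convex, (ii) a pointwise maximum (or supremum) of convex functions is convex, (iii) a nonnegative combination of convex functions is convex, and (iv) the composition $\max\{0,\cdot\}$ applied to a convex function is convex. Concretely, write $\mathcal{L}(b)=\eta(b)+N\beta(b)$ and treat the two summands separately; since $N\ge 0$, it suffices to show $\eta(b)$ and $\beta(b)$ are each convex, and then $\mathcal{L}$ is convex as a nonnegative linear combination.

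First I would handle $\eta(b)=\max_{i:X_i\cap X_0\neq\emptyset} b_i$ from Equation~\eqref{eq:eta function}. Each map $b\mapsto b_i$ is linear, hence convex, and $\eta$ is the pointwise maximum of finitely many such linear functions, so $\eta$ is convex. Next I would handle $\beta(b)=\max_i\beta_i(b)$, which reduces to showing each $\beta_i(b)$ is convex. Fix $i$ and fix a distribution $p_i=(p_{i1},\ldots,p_{iK},p_{i\unsafe})\in\P_i$. The map
$$
b\;\longmapsto\; \sum_{j=1}^K b_j\,p_{ij} + p_{i\unsafe} - b_i
$$
is affine in $b$ (the $p_{ij}$ and $p_{i\unsafe}$ are fixed coefficients), hence convex. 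Then $b\mapsto\max\{0,\;\sum_j b_j p_{ij}+p_{i\unsafe}-b_i\}$ is convex, being the pointwise maximum of the constant $0$ and an affine function. Finally, by Equation~\eqref{eq:beta function}, $\beta_i(b)$ is the supremum over $p_i\in\P_i$ of this family of convex functions of $b$, so $\beta_i$ is convex (a supremum of convex functions is convex, regardless of the index set). Therefore $\beta(b)=\max_{i\in\{1,\ldots,K\}}\beta_i(b)$ is again a pointwise maximum of finitely many convex functions, hence convex.

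Putting the pieces together: $\mathcal{L}(b)=\eta(b)+N\beta(b)$ is a sum of $\eta$ (convex) and $N\beta$ (convex, since $N\ge 0$ scales a convex function), and therefore $\mathcal{L}$ is convex in $b$. I expect the argument to be entirely routine — the only point requiring a moment of care is that the supremum in Equation~\eqref{eq:beta function} is taken over $\P_i$, which could in principle be an infinite set, but convexity of a supremum of convex functions holds for arbitrary index sets, so no compactness or finiteness of $\P_i$ is needed here. (That $\P_i$ is a simplex matters elsewhere, e.g.\ for the LP reformulations, but not for convexity of $\mathcal{L}$.) One might additionally remark that $\mathcal{L}$ is piecewise affine and thus nonsmooth, which is exactly why projected subgradient descent — rather than ordinary gradient descent — is the appropriate algorithm.
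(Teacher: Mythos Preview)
Your proposal is correct and follows essentially the same approach as the paper: both argue via disciplined convex programming, showing that the inner affine map is convex, that $\max\{0,\cdot\}$ and (pointwise) supremum/maximum preserve convexity to get $\beta_i(b)$ and then $\beta(b)$, that $\eta(b)$ is a max of coordinate functions, and that the nonnegative combination $\eta+N\beta$ is convex. Your write-up is in fact slightly more careful than the paper's in explicitly noting that the supremum over the (possibly infinite) set $\P_i$ preserves convexity regardless of its structure.
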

\begin{proof}
    The proof follows a standard structure from disciplined convex programming, where functions are composed under convexity-preserving operations. We start by proving that $\beta_i(b)$ is convex in $b$. To this end, observe that $\sum_{j=1}^{K} b_j \cdot {p}_{ij} + p_{i\unsafe} - b_j$ is convex in $b$, invariant to the value $p_i$. Next, $\max(0, \cdot)$ is a convexity-preserving function, thus $\beta_i(b)$ is convex in $b$. The finite maximization for both $\eta$ and $\beta$ are once again convexity-preserving operations and finally, addition is convexity-preserving. This concludes the proof that $\mathcal{L}(b)$ is convex in $b$.
\end{proof}

Notice that the computation of $\beta_i(b)$ in Equation~\eqref{eq:beta function} is equivalent to the inner optimization problem of \gls{cegs}.  Hence, the  \textsc{counterExDistLP} routine in Alg.~\ref{alg:counterexample-lp} (based on O-maximization method) can be used for efficient computation of subgradients of $\beta_i$.
One subgradient for $\mathcal{L}(b)$ is the following
\begin{equation}
    \nabla_b \mathcal{L}(b) = \vec{1}_\eta(b) + N \cdot \vec{1}_\beta(b),
\end{equation}
where $\vec{1}_\eta(b) = \vec{1}_i$ is a one-hot vector with the 1 being in element $i = \mathrm{argmax}_{i: X_i \cap X_0 \neq \emptyset} b_i$, and 
\begin{equation}
    \vec{1}_\beta(b) = \nabla_b \beta_l(b) = p_l^* - \vec{1}_l,
\end{equation}
where $l = \arg\max_l \beta_l(b)$, and $p_l^* = \arg\max_{p_l \in \P_l} \sum_{j=1}^{K} b_j \cdot {p}_{lj} + p_{l\unsafe} - b_l$.

A challenge with applying subgradient methods in practice is that small step sizes are required due to the non-smoothness, slowing down the convergence. 
We propose to ameliorate this by substituting the maximization in $\mathcal{L}(b)$ with an $L_p$-norm where $1 < p < \infty$. 
Specifically, let $\tilde{\eta}(b) = (b_{i_1}, \ldots, b_{i_m})$ be a vector of PWC-SBF values corresponding to the regions that overlap with $X_0$, and $\tilde{\beta}(b) = (\beta_1(b), \ldots, \beta_K(b))$ be a vector of $\beta(b)_i$ values. Then, the proposed loss function is
\begin{equation}
    \label{eq: improved loss}
    \tilde{\mathcal{L}}(b) = \lVert \tilde{\eta}(b) \rVert_p + N \lVert \tilde{\beta}(b) \rVert_p,    
\end{equation}
and the corresponding gradient is
\begin{equation}
    \label{eq: improved loss gradient}
    \begin{aligned}
        \nabla_b \tilde{\mathcal{L}}(b) &= \sum_{k = 1}^m \left(\frac{\tilde{\eta}(b)_k}{\lVert \tilde{\eta}(b) \rVert_p}\right)^{p-1} \vec{1}_{i_k} + \\
        &\phantom{=}\;\sum_{i = 1}^K \left(\frac{\tilde{\beta}_i(b)}{\lVert \tilde{\beta}(b) \rVert_p}\right)^{p-1} \nabla_b \beta_i(b).
    \end{aligned}
\end{equation}

The benefit of this loss is that $L_p$-norms are smooth, and that $\tilde{\L}(b) \geq \L(b)$ for every $p_i \in \P_i$ and all $b$. Furthermore, the relative magnitude is bounded, $\lVert y \rVert_p \,/\, \lVert y \rVert_\infty \leq \sqrt[\leftroot{-1}\uproot{3}p]{r}$ for any $y \in \mathbb{R}^r$.
With the $L_p$-norm, we may tune the over-approximation to a trade-off between smoothness and tighter approximating the $L_\infty$-norm. 
The modified loss, through smoothness, also addresses the issue that each $\P_i$ often is sparse, i.e., $\up{p}_{ij} \approx 0$ for many $j$. Therefore, $\nabla_b \mathcal{L}(b)$ is often 0 for most regions. 
We remark that the modified loss $\tilde{\mathcal{L}}(b)$ is not smooth due to the maximization and supremum in the computation of $\beta_i(b)$. 

The projected subgradient descent procedure is described in Alg. \ref{alg:gd}. For each iteration, the algorithm calculates the gradient according to Equation~\eqref{eq: improved loss gradient} and an appropriate decreasing step size $\alpha$, followed by the gradient step. Since the PWC-SBF must reside in $[0, 1]^K$ and the gradient step may violate this, it is projected back onto the admissible space. Finally, since subgradient methods are not a steepest descent method, the algorithm keeps track of the best observed loss.

\begin{algorithm}[t]
    \caption{GD for PWC-SBF}
    \label{alg:gd}
    \begin{algorithmic}
        \Require Initial set $X_0$, partition $X = \{X_i\}_{i=1}^K$, time horizon $N$, and feasible transition kernel sets $\{\P_i\}_{i=1}^K$. 
        \Ensure Optimal PWC-SBF $B^*$ and safety probability bound $\low{P}^*_\safe$
        \Statex \hrulefill
        \State $b^{(1)} \gets (p_{1\unsafe}, \ldots, p_{K\unsafe})$  \Comment{Initialize $b$}
        \State $b_{\mathrm{best}} \gets \Call{copy}{b}$ \Comment{Initialize $b_{\mathrm{best}}$}
        \State $\mathcal{L}_{\mathrm{best}} \gets \infty$
        \State $k \gets 1$ \Comment{Iteration number}
        \While{not converged}
            \State $g \gets \nabla_{b^{(k - 1)}} \tilde{\mathcal{L}}(b^{(k - 1)})$ \Comment{According to \eqref{eq: improved loss gradient}}
            \State $\alpha_k \gets \Call{StepSize}{k, g}$ 
            \State $b^{(k)} \gets b^{(k - 1)} - \alpha_k g$ \Comment{Gradient descent step} 
            \For{$i \gets 1$ to $K$} \Comment{Project onto $[0, 1]$}
                \State $b^{(k)}_i \gets \max(0, \min(1, b^{(k)}_i))$
            \EndFor
            \State $\eta_{\mathrm{best}}, \beta_{\mathrm{best}} \gets \lVert \tilde{\eta}(b^{(k)}) \rVert_\infty, \lVert \tilde{\beta}(b^{(k)}) \rVert_\infty$
            \If{$\eta_{\mathrm{best}} + N\beta_{\mathrm{best}} < \mathcal{L}_{\mathrm{best}}$}
                \State $b_{\mathrm{best}}  \gets \Call{copy}{b^{(k)}}$
                \State $\mathcal{L}_{\mathrm{best}} \gets \eta_{\mathrm{best}} + N\beta_{\mathrm{best}}$
            \EndIf
            
            \State $k \gets k + 1$
        \EndWhile
        
        \State \Return $B^* \gets (b_{\mathrm{best}},1)$ and $\low{P}^*_\safe \gets 1 - \mathcal{L}_{\mathrm{best}}$
    \end{algorithmic}
\end{algorithm}

To guarantee convergence of gradient descent (Alg.~\ref{alg:gd}) towards the optimum, it is paramount that the objective is convex. 
Theorem~\ref{th: convex loss} shows that $\L(b)$ is convex. From this,
it immediately follows that $\tilde{\L}(b)$ is also convex.
\begin{corollary}
    The modified objective $\tilde{\mathcal{L}}(b)$ is convex in $b$.
\end{corollary}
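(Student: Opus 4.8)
The plan is to decompose $\tilde{\mathcal{L}}(b) = \lVert \tilde{\eta}(b) \rVert_p + N \lVert \tilde{\beta}(b) \rVert_p$ and establish convexity of each summand in turn, then conclude by closure of convexity under nonnegative linear combinations (here with weights $1$ and $N$). For the first summand, observe that $\tilde{\eta}(b) = (b_{i_1},\dots,b_{i_m})$ is an affine (indeed a linear coordinate-projection) map of $b$, and $\lVert \cdot \rVert_p$ is a norm, hence convex; therefore $\lVert \tilde{\eta}(b) \rVert_p$ is a convex function precomposed with an affine map, which is convex.

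For the second summand I would invoke the vector-composition rule for convexity \cite[Chapter 3]{boyd2004convex}: if $h:\mathbb{R}^K\to\mathbb{R}$ is convex and nondecreasing in each argument on a set containing the range of $(g_1,\dots,g_K)$, and each $g_i$ is convex, then $b\mapsto h(g_1(b),\dots,g_K(b))$ is convex. I take $h = \lVert\cdot\rVert_p$ restricted to the nonnegative orthant $\mathbb{R}^K_{\geq 0}$, on which it is convex and coordinatewise nondecreasing for $p\ge 1$, and $g_i = \beta_i(b)$, each of which was shown convex in the proof of Theorem~\ref{th: convex loss}. The one additional fact that must be checked is that $\beta_i(b)\ge 0$ for all $b$, so that $(\beta_1(b),\dots,\beta_K(b))$ indeed lies in $\mathbb{R}^K_{\geq 0}$ where the monotonicity hypothesis holds; this is immediate from the $\max\{0,\cdot\}$ appearing in Equation~\eqref{eq:beta function}. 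Hence $\lVert \tilde{\beta}(b) \rVert_p$ is convex, and summing the two convex terms with the nonnegative weights $1$ and $N$ yields convexity of $\tilde{\mathcal{L}}(b)$.

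The main (and essentially only) obstacle is ensuring the monotonicity hypothesis of the composition rule is genuinely met: $\lVert\cdot\rVert_p$ is \emph{not} nondecreasing on all of $\mathbb{R}^K$, so the nonnegativity $\beta_i(b)\ge 0$ is doing real work and should not be glossed over. If one prefers to avoid the composition rule altogether, an alternative route is to use the dual representation $\lVert y \rVert_p = \sup\{\, z^\top y : z\in\mathbb{R}^K_{\geq 0},\ \lVert z\rVert_q\le 1\,\}$ with $1/p+1/q=1$ (valid for nonnegative $y$), so that $\lVert\tilde{\beta}(b)\rVert_p = \sup_{z} \sum_{i=1}^K z_i\,\beta_i(b)$; each map $b\mapsto \sum_i z_i\beta_i(b)$ is a nonnegative combination of the convex functions $\beta_i$, hence convex, and a pointwise supremum of convex functions is convex. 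Either way the argument is short, since the heavy lifting — convexity of each $\beta_i(b)$ — has already been done in Theorem~\ref{th: convex loss}.
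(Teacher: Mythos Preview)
Your proof is correct and follows essentially the same approach as the paper's, which likewise invokes convexity of the components (established in Theorem~\ref{th: convex loss}) together with the fact that $L_p$-norms are convexity-preserving. If anything, your argument is more complete: the paper's one-line justification does not spell out the nonnegativity $\beta_i(b)\ge 0$ that is needed for the monotone vector-composition rule to apply, whereas you identify this point explicitly and even offer a clean dual-representation alternative.
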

\begin{proof}
    The proof follows from the fact that $\eta$ and $\beta$ are convex in $b$. Further, $L_p$-norms are convexity-preserving operations \cite{boyd2004convex}.
\end{proof}

We may characterize the non-asymptotic convergence of Alg.~\ref{alg:gd}, i.e. that it finds an $\epsilon$-optimal $b$ within a finite number of steps. 
\begin{proposition}
    Let $\epsilon$ denote the desired error, the gradient bound $G \geq \sup_{b \in [0, 1]^K} \lVert \nabla_b \tilde{\mathcal{L}}(b) \rVert_2$, and the upper bound of the initial distance to the optimal barrier $b^*$ by $R \geq \lVert b^{(1)} - b^* \rVert_2$. Then there exist integers $N_1$ and $N_2$ such that $\alpha_k \leq \epsilon / G$ for all $k > N_1$ and
    \[
        \sum_{k = 1}^{N_2} \alpha_k \geq \frac{1}{\epsilon}\left(R^2 + G^2 \sum_{k = 1}^{N_1} \alpha_k^2 \right).
    \]
    Then for any $k > N = \max(N_1, N_2)$ it holds that 
    \begin{equation}
        \tilde{\mathcal{L}}(b_{\mathrm{best}}) - \tilde{\mathcal{L}}(b^*) \leq \epsilon.
    \end{equation}
\end{proposition}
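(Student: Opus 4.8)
The plan is to carry out the classical convergence analysis of projected subgradient descent, now applied to the convex objective $\tilde{\mathcal{L}}(b)$ (convex by the corollary immediately preceding this proposition) over the compact convex set $[0,1]^K$. Write $g_k$ for the subgradient of $\tilde{\mathcal{L}}$ at $b^{(k)}$ produced by Equation~\eqref{eq: improved loss gradient}, so that $b^{(k+1)}$ is the Euclidean projection of $b^{(k)} - \alpha_k g_k$ onto $[0,1]^K$. Two standard facts drive everything: the projection onto $[0,1]^K$ is non-expansive and fixes $b^* \in [0,1]^K$, and by convexity $\langle g_k, b^{(k)} - b^*\rangle \geq \tilde{\mathcal{L}}(b^{(k)}) - \tilde{\mathcal{L}}(b^*)$. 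Moreover every iterate stays in $[0,1]^K$ after projection (and $b^{(1)} \in [0,1]^K$ by construction), so $\lVert g_k \rVert_2 \leq G$ for all $k$.

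The first step is the one-step recursion. Expanding $\lVert b^{(k+1)} - b^* \rVert_2^2$ and using non-expansiveness of the projection together with the subgradient inequality gives
\begin{equation*}
\lVert b^{(k+1)} - b^* \rVert_2^2 \leq \lVert b^{(k)} - b^* \rVert_2^2 - 2\alpha_k\big(\tilde{\mathcal{L}}(b^{(k)}) - \tilde{\mathcal{L}}(b^*)\big) + \alpha_k^2 G^2 .
\end{equation*}
Summing over $k = 1,\ldots,T$, dropping the nonnegative term $\lVert b^{(T+1)} - b^* \rVert_2^2$, and using $\lVert b^{(1)} - b^* \rVert_2 \leq R$ yields
\begin{equation*}
2\sum_{k=1}^{T}\alpha_k\big(\tilde{\mathcal{L}}(b^{(k)}) - \tilde{\mathcal{L}}(b^*)\big) \leq R^2 + G^2\sum_{k=1}^{T}\alpha_k^2 .
\end{equation*}

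Next I split the right-hand sum at $N_1$. For $k > N_1$ the step-size bound combined with $\lVert g_k \rVert_2 \leq G$ makes each $\alpha_k^2 G^2$ at most a fixed multiple of $\alpha_k$ (this is where one checks whether the cutoff should read $\alpha_k \leq \epsilon/G$ or $\alpha_k \leq \epsilon/G^2$; it affects only constants), so $G^2 \sum_{k=N_1+1}^T \alpha_k^2 \leq \epsilon \sum_{k=1}^{T}\alpha_k$. Dividing by $2\sum_{k=1}^T \alpha_k$ and lower-bounding the left side by $\min_{1\le k\le T}\tilde{\mathcal{L}}(b^{(k)}) - \tilde{\mathcal{L}}(b^*)$, I obtain
\begin{equation*}
\min_{1\le k\le T}\tilde{\mathcal{L}}(b^{(k)}) - \tilde{\mathcal{L}}(b^*) \leq \frac{R^2 + G^2\sum_{k=1}^{N_1}\alpha_k^2}{2\sum_{k=1}^{T}\alpha_k} + \frac{\epsilon}{2}.
\end{equation*}
For $T \geq N_2$ the hypothesis $\sum_{k=1}^{N_2}\alpha_k \geq \frac{1}{\epsilon}(R^2 + G^2\sum_{k=1}^{N_1}\alpha_k^2)$ forces the first term below $\epsilon/2$, hence the minimum-loss iterate up to step $T$ is $\epsilon$-optimal; since $b_{\mathrm{best}}$ is the best iterate retained by Alg.~\ref{alg:gd}, $\tilde{\mathcal{L}}(b_{\mathrm{best}}) - \tilde{\mathcal{L}}(b^*) \leq \epsilon$ for all $k > N = \max(N_1, N_2)$.

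I expect the only real friction to be bookkeeping rather than mathematics: Alg.~\ref{alg:gd} records the best iterate by the $L_\infty$-loss $\mathcal{L}$, whereas the statement is phrased in terms of $\tilde{\mathcal{L}}$, so I will either redefine $b_{\mathrm{best}}$ to be the best iterate under $\tilde{\mathcal{L}}$, or invoke the norm equivalence $\mathcal{L}(b) \leq \tilde{\mathcal{L}}(b) \leq \max(m^{1/p},K^{1/p})\,\mathcal{L}(b)$ to transfer the guarantee up to a constant. The other point requiring care is pinning down the constant in the step-size cutoff so that the split term is exactly $\epsilon/2$; everything else — convexity of $\tilde{\mathcal{L}}$, non-expansiveness of the box projection, and validity of the subgradient formula — is already available in the paper or is textbook material.
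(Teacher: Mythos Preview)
Your proposal is correct and coincides with the paper's approach: the paper's own proof is a one-line citation to the diminishing/non-summable step-size convergence result in Boyd's subgradient notes, and what you have written is precisely the standard telescoping argument from those notes spelled out in full. You have also correctly flagged the two bookkeeping issues the paper glosses over --- the constant in the step-size cutoff (whether $\epsilon/G$ or $\epsilon/G^2$) and the fact that Alg.~\ref{alg:gd} tracks $b_{\mathrm{best}}$ under $\mathcal{L}$ rather than $\tilde{\mathcal{L}}$ --- neither of which the paper addresses.
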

The $\epsilon$-optimal convergence follows directly from a result for convergence with strictly decreasing and non-summable step sizes in \cite{boyd2003subgradient}.
A conservative bound on $R$ is $K$ and on $G$ is $m + NK \sqrt{2}$.
To conclude, Alg.~\ref{alg:gd} converges to a desired precision in a finite number of steps \cite{boyd2003subgradient}. 

\paragraph*{Computational Complexity}
The complexity per iteration of the projected subgradient descent (Alg.~\ref{alg:gd}), utilizing the O-maximization procedure to compute $\beta_i$, is $\O(K \log K)$. We emphasize that much of the computation can be parallelized.

\section{Evaluations}
\label{sec:evaluations}

\begin{table*}[t]
\caption{Benchmark results for the \emph{piecewise constant} and \emph{continuous} SBF methods. The three PWC-SBF synthesis methods are:  dual LP, \gls{cegs} and \gls{gd}. These are compared against state-of-the-art methods: \gls{sos} and \gls{nbf}. 
$K$ denotes the size of partition of $X_\safe$, 
$\tau_{p}$ is the computation time for the bounds on the transition kernel,
$\low{P}_\safe$ is the obtained lower bound on the safety probability for $N=10$ time steps,
$\tau_o$ is the computation time for the SBF synthesis,
and \textrm{Deg} is the degree of the \gls{sos} polynomial for SBF.
}
\label{table:results}
\resizebox{1\textwidth}{!}{%
\begin{tabular}{@{}c|c c| cc | cc | cc | cc| ccc 
!{\color{white}\vline} @{}}
\toprule
 &  &  &   \multicolumn{5}{c }{\hspace{6mm} \underline{Piecewise Constant Stochastic Barriers}} 
  &  &    \multicolumn{4}{c }{\hspace{10mm}\underline{Continuous Stochastic Barriers$^{*}$}} \\
 & & & \multicolumn{2}{c|}{Dual LP} & \multicolumn{2}{c|}{CEGS} &  \multicolumn{2}{c|}{GD} &  \multicolumn{2}{c|}{NBF} & \multicolumn{3}{c}{SOS} \\
 \cline{2-14}
  {Model}   & {$K$} & $\tau_{p} (s) $  & $\low{P}_\safe$ &$\tau_{o}(s)$ 
   & $\low{P}_\safe$   & $\tau_{o} (s) $ & $\low{P}_\safe$   & $\tau_{o} (s) $ & $\low{P}_\safe$ & $\tau_{o} (s)$ & \textrm{Deg} & $\low{P}_\safe$ & $\tau_{o} (s) $ \\
  \hline 
  {Linear}  &  64 & 0.02  & 0.992 &  0.52
 &   0.992   &  0.04 &  0.952 &   0.04   & 0.585 & 3850.93 &4 &  0.582 & 0.014  \\
 2D &    225 & 0.31  & 0.998   & 164.60  &  0.998 &   0.44  & 0.973  & 0.20  & 0.940 & 3991.47 &  8& 0.582 & 0.265  
 \\ 
  \textit{Convex }&  900  &  8.85 & 0.999 & 1087.78    & 0.999   & 17.93  & 0.990  & 7.22 & 0.961& 4025.67 & 30  & 0.978  & 151.16  \\
  &  2500  &   41.44  & 0.999  & 2897.77  & 0.999   & 88.45   & 0.998   & 52.78  & 0.976 & 4085.65 & 36 
   & 0.992  &    458.21 \\
\cline{3-14}
  \hline 
  {Linear}   & 900 & 5.04   & 0.494  & 1197.99 
 &  0.494 &    3.79    & 0.494  &    3.52 & 0.792 & 3546.69 &12  & 0.010  & 0.02 
 \\
 2D  & 1225  & 8.20 & 0.800  & 1389.78  & 0.800   & 7.22 & 0.800 & 6.64  &0.844 & 3579.58 & 20 & 0.010 & 11.08 
\\ 
 \textit{ Non-Convex}  & 1444 & 9.18  &  0.921 & 1545.45 &  0.921  & 11.65  & 0.921  & 10.12  & 0.855 & 3589.13 & 24 & 0.023   & 37.89    
  \\
  & 2926 & 47.98  & 0.927  & 3161.56  & 0.927 &  98.36 & 0.927  &  20.86 &0.928 &3599.85  & 26 & 0.034   & 62.88 
  \\
  & 5890 &  179.94  & 0.929  & 8191.65  &  0.929 & 458.44   &  0.929  & 133.84  & 0.929&3675.77 & 30 &  0.075 & 196.85  \\
     & 11818 & 477.45  & -  & \texttt{TO}  & 0.936  & 1875.49   & 0.936  & 842.67 & 0.931&3744.23    & 34 & - & \texttt{OM}  
     \\
         & 24336 &  987.65  & - & \texttt{TO}  &  0.938  & 4441.55   & 0.938   & 3099.30  & 0.936 & 4234.56   & 36 & - &\texttt{OM}  
        \\
\cline{3-14}
 \hline
{Pendulum}  & 120 & 6.37 & 1.00 & 0.51  &  0.99  & 5.84  & 0.989  & 3.75& 0.999 & 4242.89 & 4  & 0.999 & 7.71   \\
2D  &    240 & 18.33 &  1.00 & 6.08 & 1.00    & 14.88 & 0.990 & 9.99& 0.999 & 4457.82 & 4 & 0.999 & 34.96  \\
 &    480 &  37.84  & 1.00 &29.39   &  1.00  &43.42  & 0.999  & 17.88  & 0.999 & 4675.12 & 4 & 0.999 & 187.60 
 \\ \cline{3-14}
  \hline
  {Unicycle}  &  1250 & 1103.42 &  0.750 &  1000.19  &  0.750 & 26.37  & 0.750 & 5.68 & \cellcolor{gray!5} & \cellcolor{gray!5} &  2  & 0.00  & 3110.21  \\
 4D &   1800 & 1756.25  & 0.975  & 1719.58  & 0.975   & 92.26  & 0.975  & 25.78  & \cellcolor{gray!5} & \cellcolor{gray!5}  & 4 & 0.00  &   5451.19  \\
 &  2400 & 2001.11 & 0.998 &  2548.56  & 0.998  & 145.45  &0.998 & 55.59 & \cellcolor{gray!5} & \cellcolor{gray!5}& 6  & - & \texttt{OM}  \\
  \hline
   Quadrotor  &  7865 & 80.80  &0.770  & 9906.67 & 0.770 & 1174.49 & 0.770   & 2589.56 & \cellcolor{gray!5} & \cellcolor{gray!5} 
    &  2  & 0.584  & 0.20  \\
6D &  15625  & 160.61 & - &\texttt{TO}  & 0.901  & 3788.98& 0.901  & 3258.87& \cellcolor{gray!5} & \cellcolor{gray!5} &  8  & 0.900 & 8628.58 
   \\ 
   \textit{Convex} &    46656  & 458.59 & - & \texttt{TO}  & -& \texttt{TO}& 0.912   & 9542.75  & \cellcolor{gray!5} & \cellcolor{gray!5}  & 12  &-&\texttt{OM} \\
  \hline
   Quadrotor  &  15625&  188.10  & -&\texttt{TO} & 0.670 & 3845.25   & 0.670 & 3478.31 &  \cellcolor{gray!5} & \cellcolor{gray!5} &  4  & 0.00  & 4.91  \\ 
     6D &    31250& 395.59 & -&\texttt{TO} & 0.810 & 9548.78 & 0.810 & 5878.28 & \cellcolor{gray!5} & \cellcolor{gray!5} &  8 & 0.00  & 8715.54  \\ 
  \textit{Non-Convex} &   46656  & 506.99  &  - & \texttt{TO} &-&\texttt{TO} & 0.900  & 9789.54  & \cellcolor{gray!5} & \cellcolor{gray!5} & 12  & -& \texttt{OM} 
  \\  \hline
   Quadrotor  & 65536  & 845.44  &  - & \texttt{TO}& -& \texttt{TO} & 0.500  & 19377.90 &\cellcolor{gray!5} & \cellcolor{gray!5} & 2 & 0.00  & 14830.23  \\ 
    8D & 128000 & 2530.74  & -& \texttt{TO} & - & \texttt{TO} &  0.560 & 39132.59 & \cellcolor{gray!5} & \cellcolor{gray!5}  & 4   & - & \texttt{OM} \\ 
\bottomrule

\end{tabular}
}
\text{$^{*}$ For linear systems that use \gls{sos} optimization, the state space is not partitioned. Further, synthesis methods for \gls{sos} and }
\text{ \hspace{0.5mm} \gls{nbf} do not use bounds on the transition kernel.
}
\end{table*}

To show the power of PWC-SBFs and efficacy of our proposed synthesis methods, we study their performance on a set of seven benchmark problems, consisting of stochastic systems with linear and nonlinear dynamics and varying dimensionality, from 2D to 8D.
We also compare the performance of PWC-SBFs against state-of-the-art continuous SBF techniques, namely \gls{sos} and \gls{nbf}.

Our implementations of all methods, except \gls{nbf}, including the three proposed PWC-SBF synthesis methods, are in \texttt{Julia}. The code is publicly available for download on GitHub\footnote{
\url{https://github.com/aria-systems-group/StochasticBarrier.jl}.}.  All the computations for the benchmarks were performed on a computer with 3.9 GHz 8-Core CPU and 128 GB of memory.

The stochastic systems we considered are:
\begin{itemize}
    \item 2D linear stochastic system 
    with a (1) convex $X_\safe$, and (2) non-convex $X_\safe$,
    \item 2D pendulum \gls{nndm} adapted from \cite{mazouz2022safety},
    \item 4D nonlinear unicycle model under a hybrid feedback control law,
    \item 6D linearized lateral and vertical quadrotor guidance model with a hybrid controller with a (1) convex $X_\safe$, and (2) non-convex $X_\safe$,
    \item 8D linearized lateral and longitudinal quadrotor guidance model with a hybrid controller. 
\end{itemize}

A summary of the results is shown in Table~\ref{table:results}. 
\ifarxiv
More details on the results are presented in Table~\ref{table:results_full} in Appendix~\ref{appendix: detailed table}.
\fi
For all case studies, $N = 10$ time steps. \texttt{TO} and \texttt{OM} denote time-out ($\tau > 45 \times 10^3 s$)  and out-of-memory, respectively.
Due to limitations pertaining to memory or hybrid control, the \gls{nbf} is only run for the 2D linear system and the pendulum NNDM. 
It is further denoted that all \gls{nbf} experiments are run for 150 epochs, with 400 iterations per epoch. Each \gls{nbf} architecture consists of 2 hidden layers with 32 ReLu activated neurons per layer.
For a relevant comparison, it is noted that NBF uses the same initial partitioning as PWC.
Finally, it is noted that the GD approach uses a decay rate of $0.999$, and a termination condition when the difference in $\max \beta$ is less than $1e^{-6}$ among $50$ iterations.

We provide detailed discussions on the experimental setup and obtained results below.
For case studies with non-convex $X_\safe$, we use the notion of \emph{obstacle} to refer to the unsafe sets.  All the obstacles are hyper-rectangles defined 
by a center point
$c = (c_1, \ldots, c_n) \in \mathbb{R}^{n}$ and the half-length along each dimension denoted by ${\epsilon}_i$. Formally,  
$$\C(c,\epsilon) = \big \{ (x_1, ..., x_n) \in \reals^n  : |x_i - c_i| \leq \epsilon_i \;\; \forall 1\leq i \leq n \big\}.$$

\subsection{2D Linear System}
We considered a linear stochastic system in $\reals^2$ with dynamics
$$\mathbf{x}(k+1) = 0.5I \cdot \mathbf{x}(k) + \mathbf{v},$$
where $I$ is the identity matrix, and $\pv \sim \N(0,10^{-2}I)$.
The initial set  $X_0 = [-0.8, -0.6] \times [-0.2, 0.0]$. 
We consider two cases for $X_\safe$, both defined based on the set $X_D = [-1, 0.5] \times [-0.5, 0.5]$
as shown in Figure~\ref{fig:env}: 
\begin{enumerate}
    \item Convex safe set: $X^{conv}_\safe = X_D$, and
    \item Non-convex safe set: $X^{ncon}_\safe = X_D \setminus \bigcup_{i=1}^2 \C_i(c_i,\epsilon_i)$, where
    \begin{align*}
        &c_1 = (-0.55, +0.30), \quad \epsilon_1 = (0.02, 0.02), \\
        &c_2 = (-0.55, -0.15), \quad \epsilon_2 = (0.02, 0.02).
    \end{align*}
\end{enumerate}

\paragraph*{Convex $X^{conv}_\safe$ Case}
Observe in Table~\ref{table:results} that all three PWC-SBF methods outperform the state-of-the-art continuous SBF techniques \gls{sos} and \gls{nbf}. It takes a degree $36$ polynomial SBF to obtain a result similar to that of the \gls{pwc} methods using just $K = 64$, despite the absence of obstacles (convexity of $X_\safe$). 
This is mainly attributed to the fact that initial set is not configured in a symmetric fashion, making curve fitting difficult for even simple 2-dimensional problems. 

The main limitation of \gls{nbf} in this experiment pertains to the training time. 
Note that it takes
the continuous SBF methods 3-4 orders of magnitude larger in computation time to arrive to a similar value of $\low{P}_\safe$ as the PWC methods.
In addition, 
the continuous SBF methods never reached $\low{P}_\safe = 0.999$. 


Among the PWC-SBF methods, the Dual LP and CEGS always obtain the same $\low{P}_\safe$, as they are both exact optimal methods. However,  it takes CEGS at least an order of magnitude less time to arrive to this optimal value.  That is due to the large number of the decision variables in the Dual LP, which increases as $K$ increases.  The GD approach is the fastest method; however, its 
hyperparameters are tricky to design. For this case study, GD terminated before convergence to the true optimal $\low{P}_\safe$.

\paragraph*{Non-convex $X^{ncon}_\safe$ Case}
For this case, the power of the PWC-SBFs are even more evident (see Figure~\ref{fig:sos_pwc}). Due to the fact that the obstacles are within the safe set, the SOS approach has difficulties fitting a function. This forces $B(x) \geq 1$ in the entire state space, as is evident from Figure~\ref{fig:barrier_sos}. This phenomenon does not improve much for a significant increase in the degree of the SOS polynomial, as it is an inherent limitation of the appraoch. 

On the other hand, our methods effectively tackle this problem by assigning $b_i = 1$ to the regions that overlap with the obstacles, and optimize over the remainder of the state space. This results in the \gls{pwc} as depicted in Figure~\ref{fig:barrier_pwc}. The difference is likewise highly notable in terms of the safety probability $\low{P}_\safe$, where SOS can only guarantee safety probability of $0.075$ for the polynomial degree 30, where two of PWC-SBF methods provide up to $0.938$ safety probability.

Note that for large values of $K$, the dual \gls{lp} is unable to terminate, due to the scope of the required convex optimization. A similar fact is observed for high degree SOS barrier polynomials. The \gls{nbf} method performs well in terms of safety probability, at lower number of partitions even better than PWC SBF. Nonetheless, the \gls{pwc} SBFs are superior in terms of computation time.


\subsection{2D Pendulum NNDMs}
\begin{figure*}[t!]
    \centering
    \begin{minipage}{\textwidth}
        \begin{subfigure}[t]{0.33\textwidth}
            \centering
            \includegraphics[width=1\linewidth]{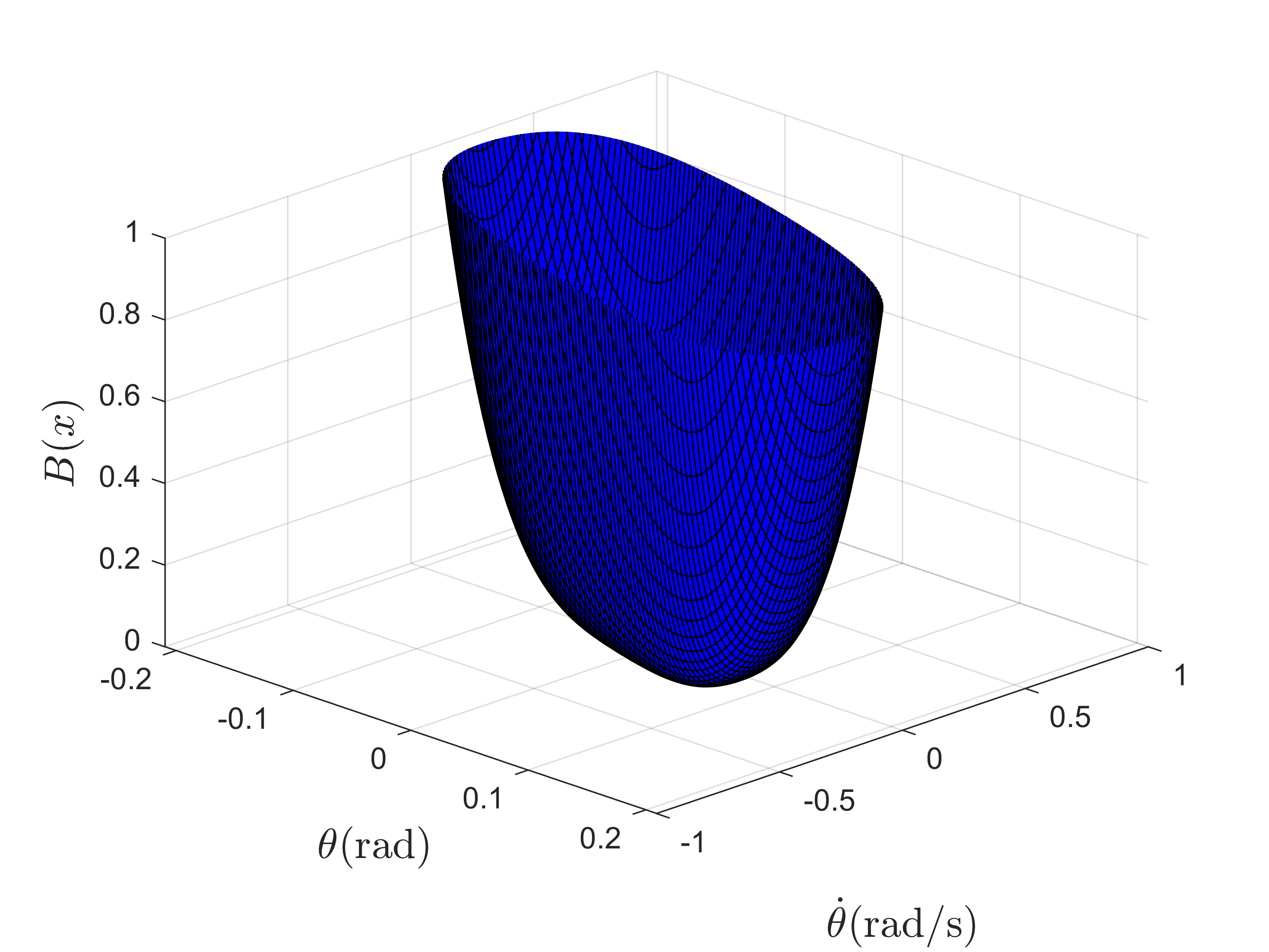}
            \subcaption{Degree-8 SOS-polynomial}
            \label{fig:pendulum_sos}
        \end{subfigure}%
             \begin{subfigure}[t]{0.33\textwidth}
            \centering
            \includegraphics[width=1\linewidth]{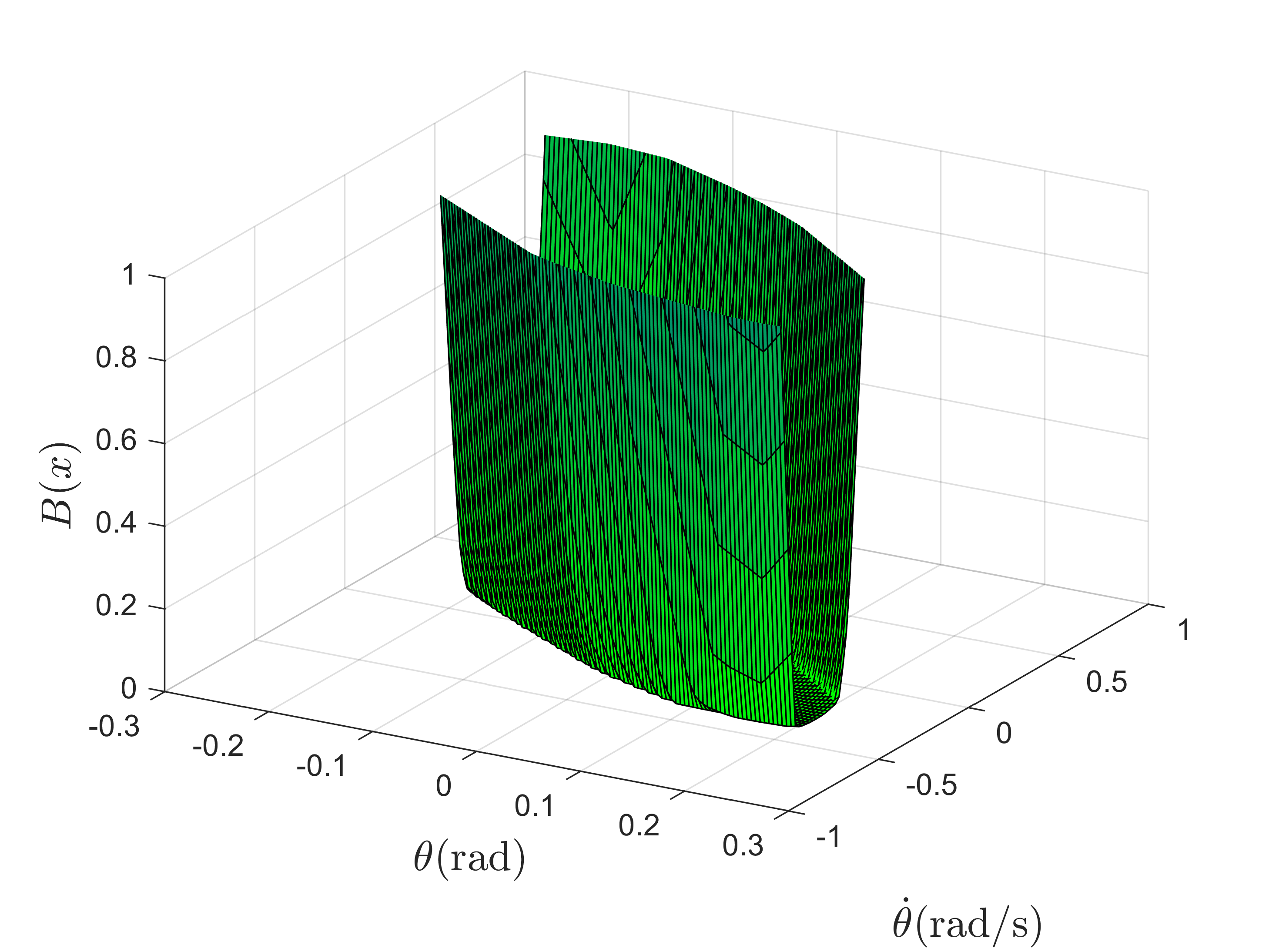}          
            \subcaption{\gls{nbf} with $|K|= 480$.}
            \label{fig:pendulum_nbf}
        \end{subfigure}%
        \begin{subfigure}[t]{0.33\textwidth}
            \centering
            \includegraphics[width=1\linewidth]{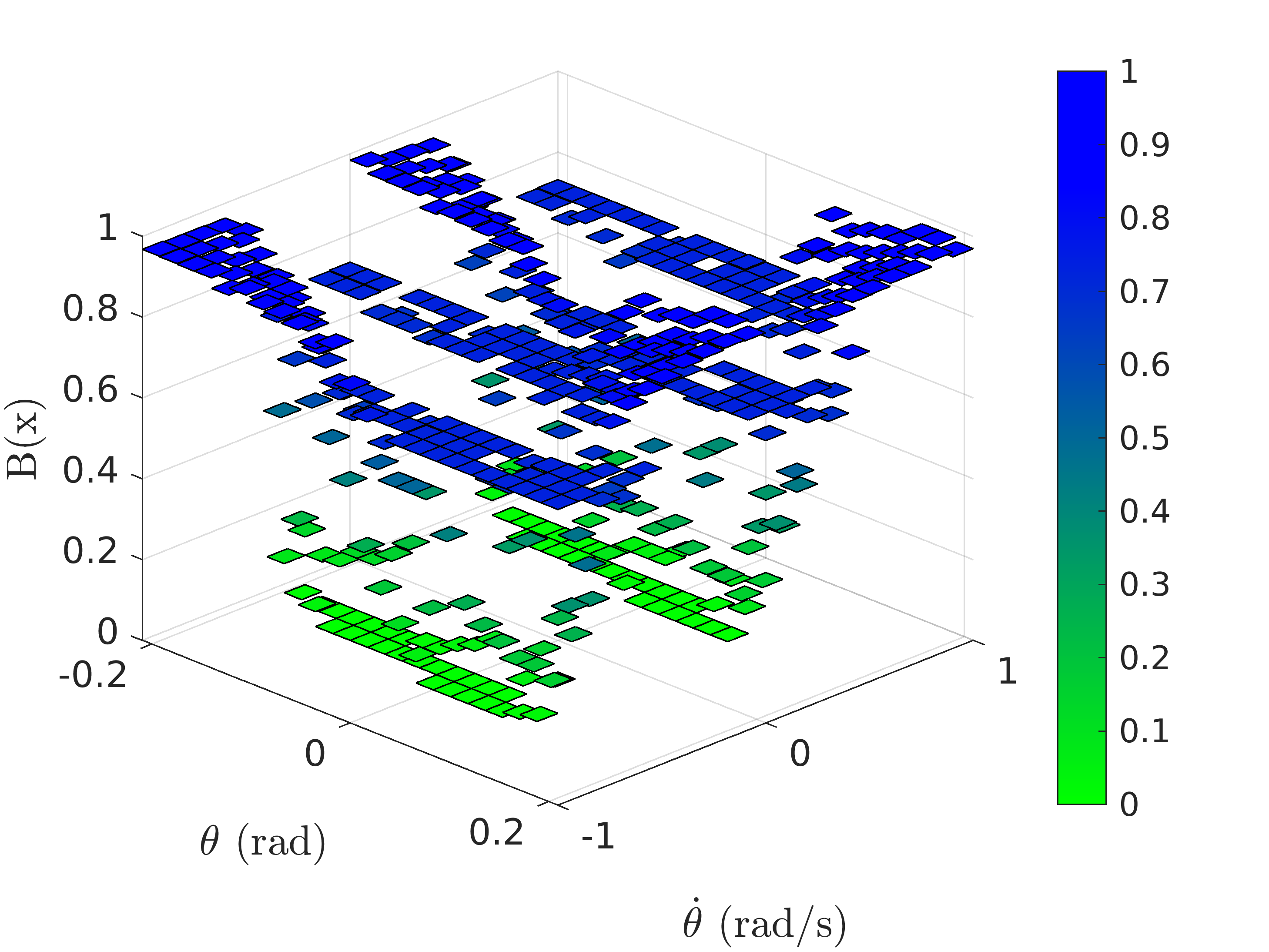}
        \subcaption{\gls{pwc} (dual \gls{lp}) with $|K|= 480$.}
        \label{fig:pendulum_pwc}
        \end{subfigure}%
    \end{minipage}
    \caption{
    SBF plots for the pendulum NNDM case study. 
    }
    \label{fig:pendulum_barrier}
\end{figure*}

In this case study,  we consider the NNDM from \cite{mazouz2022safety} with dynamics $\px = f^{NN}(x) + \pv$, where $f^{NN}$ is a NN trained on a Pendulum agent with a fixed mass $m$ and length $l$ in closed-loop with a NN controller.  The actuator is limited by $u \in [-1, 1]$ and the dynamics of the pendulum are
\begin{align*}
    \dot{\theta}_{k+1} & = \dot{\theta}_{k} + \frac{3g}{2l}\sin(\theta_{k})\delta t^2 + \frac{3}{ml^2}u\delta t^2,\\
    \theta_{k+1} & = \theta_k + \dot{\theta}_{k+1} \delta t
\end{align*}
For the details on the trained model, we refer to \cite{mazouz2022safety}.
For this system, noise $\pv\sim \N(0,10^{-4}I)$. 
The safe set is $\theta \in [-\frac{\pi}{15}, \frac{\pi}{15}]$ and $\dot{\theta} \in [-1, 1]$ and the initial set is $\theta, \dot{\theta} \in [-0.01, 0.01]$. 

We note that since for this particular case study the dynamics are given by a NN, the SOS approach also uses the partitioning of size $K$, where local linear relaxations of $f^{NN}$ are performed. For more information on this SOS formulation of the SBF see \cite{mazouz2022safety}.

As can be observed from Table~\ref{table:results}, the \gls{pwc}, \gls{sos} and \gls{nbf} methods perform similarly in terms of safety probability. It is noted that \gls{sos} performs well due to the fact that the initial set is centered in the state space. 
In Figure~\ref{fig:pendulum_barrier}, we show the plots of the \gls{sos}, \gls{nbf} and \gls{pwc} (dual approach) SBFs. 
In Figure~\ref{fig:pendulum_sos}, we observe that the obtained SOS polynomial is an over-fit, as the values near the boundary of the safe set by-far exceed one. We observe a similar yet less excessive pattern for the \gls{nbf} in Figure~\ref{fig:pendulum_nbf}. For the \gls{pwc} SBF the barrier in the domain is less then or equal to 1. 
Among the PWC SBF methods, a general trend is observed: the dual LP is the slowest algorithm and the GD the fastest.  It is also worth noting that that the GD method terminates with a $\low{P}_\safe$ that is slightly below the optimal value, indicating the difficulty of designing the termination criterion.

\subsection{4D Unicycle}

\begin{figure}[b!]
    \centering
    \includegraphics[width=1\linewidth]{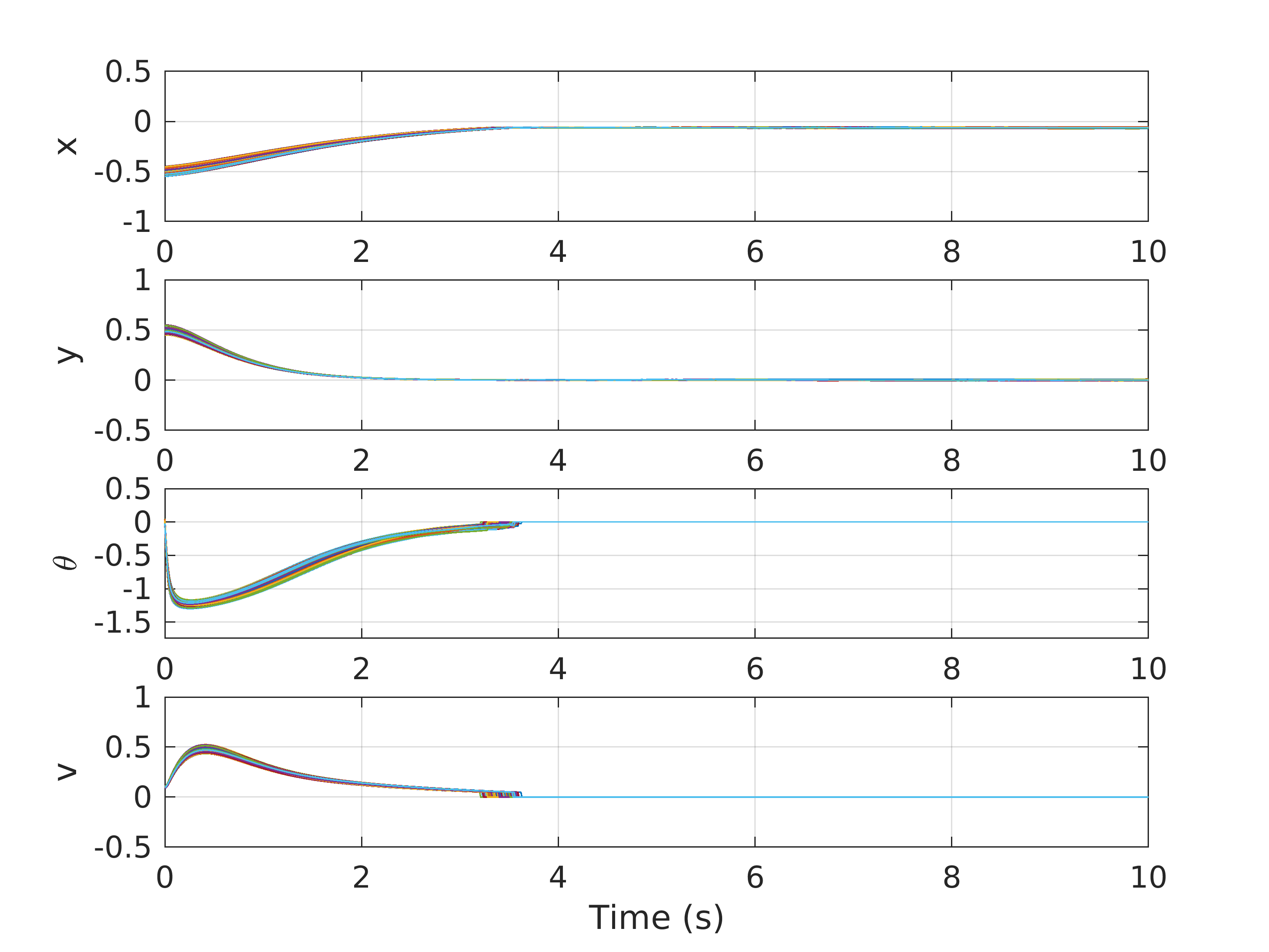}
    \caption{
    Monte Carlo simulation ($10^{5}$ instances) for the unicycle model with $t = 10 s$. 
    }
    \label{fig:unicycle_sim}
\end{figure}

In this case study, we consider a wheeled mobile robot with the dynamics of a unicycle
\begin{align*}
    \dot{x} = v \cos \theta, \quad
    \dot{y} = v \sin \theta, \quad
    \dot{\theta} = \omega, \quad 
    \dot{v} = a,
\end{align*}
where $x \in [-1.0, 0.5]$ and
$y \in [-0.5, 1.0]$ are the Cartesian position, $\theta \in [-1.75, 0.5]$ is the orientation with respect to the $x-$axis, and $v \in [-0.5, 1.0]$ is the speed. The inputs are steering rate $\omega$ and acceleration $a$.
We design a feedback linearization controller according to \cite{de2000stabilization} coupled with an LQR stabilizing controller, making this a hybrid model.
We use $\Delta t = 0.01$ to obtain a discrete time dynamics, using the Euler method.  We add noise $\pv \sim \N(0, 10^{-4}I)$ to capture the time-discretization error that is inherent to the Euler method.

Figure~\ref{fig:unicycle_sim} shows $10^5$ Monte Carlo simulations of the trajectories of the unicycle from the initial set defined by a ball centered at $(-0.5, -0.5, 0, 0)$ with radius $0.01$.
These simulated trajectories suggest that the system is stable under the hybrid control law. This is in-line with the results in Table~\ref{table:results} for the \gls{pwc} methods, especially for $K = 2400$, for which $\low{P}_\safe = 0.998$. Note that the SOS approach highly suffers for this nonlinear case study, with $\low{P}_\safe = 0$, despite very long computation times. This further emphasizes the efficacy of our methods. Observe that the GD approach terminates with the optimal $\low{P}_\safe$ value in this case study. We finally note that since the controller is hybrid, current \gls{nbf} formulations cannot be employed for this problem.

\subsection{6D and 8D Quadcopter Systems}


\begin{figure*}[t!]
    \centering
    \begin{minipage}{\textwidth}
        \begin{subfigure}[t]{0.33\textwidth}
            \centering
            \includegraphics[width=1\linewidth]{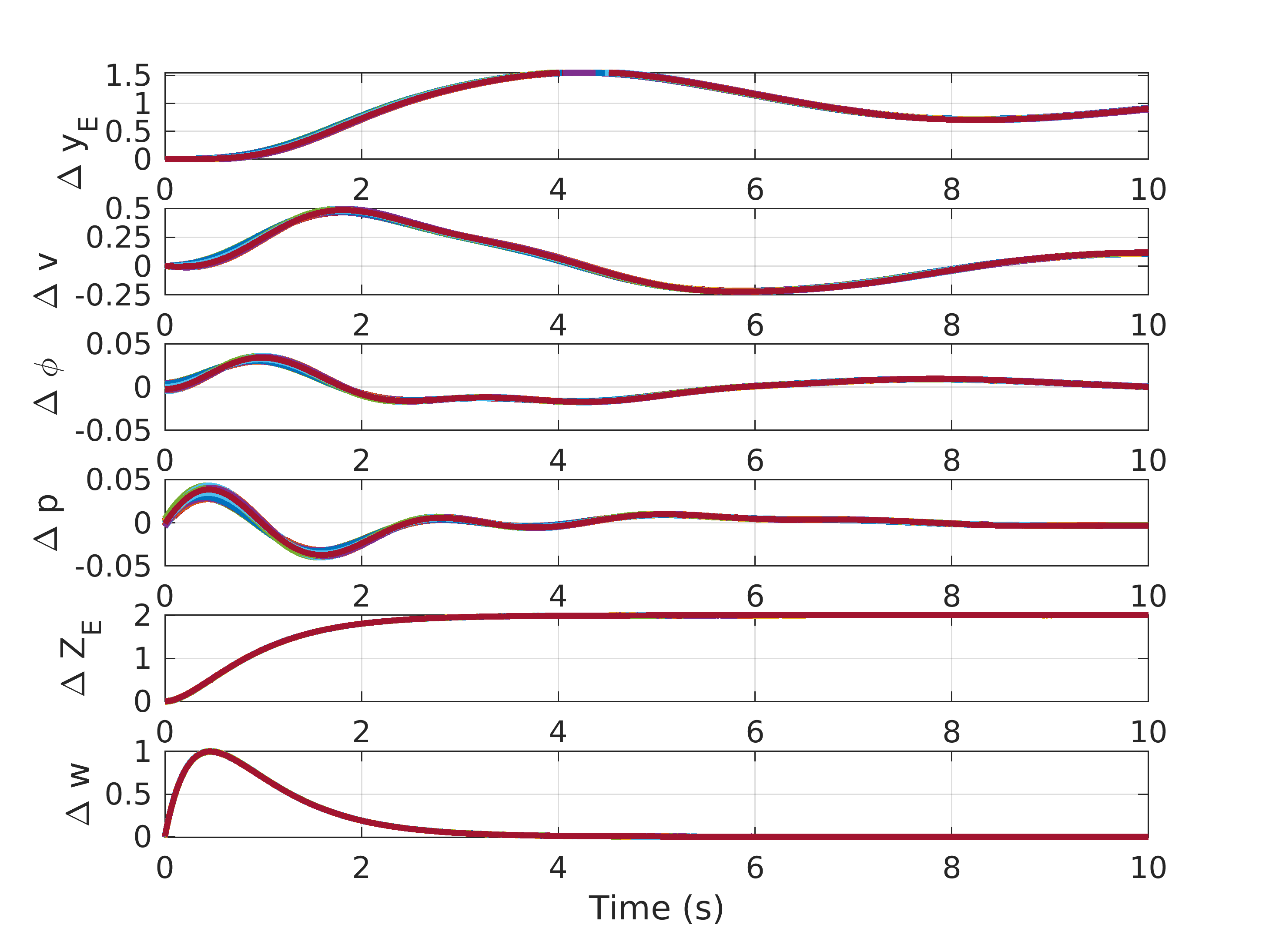}
            \subcaption{State Evolution}
                   \label{fig:quadrotor_2d}
        \end{subfigure}%
             \begin{subfigure}[t]{0.33\textwidth}
            \centering
            \includegraphics[width=1\linewidth]{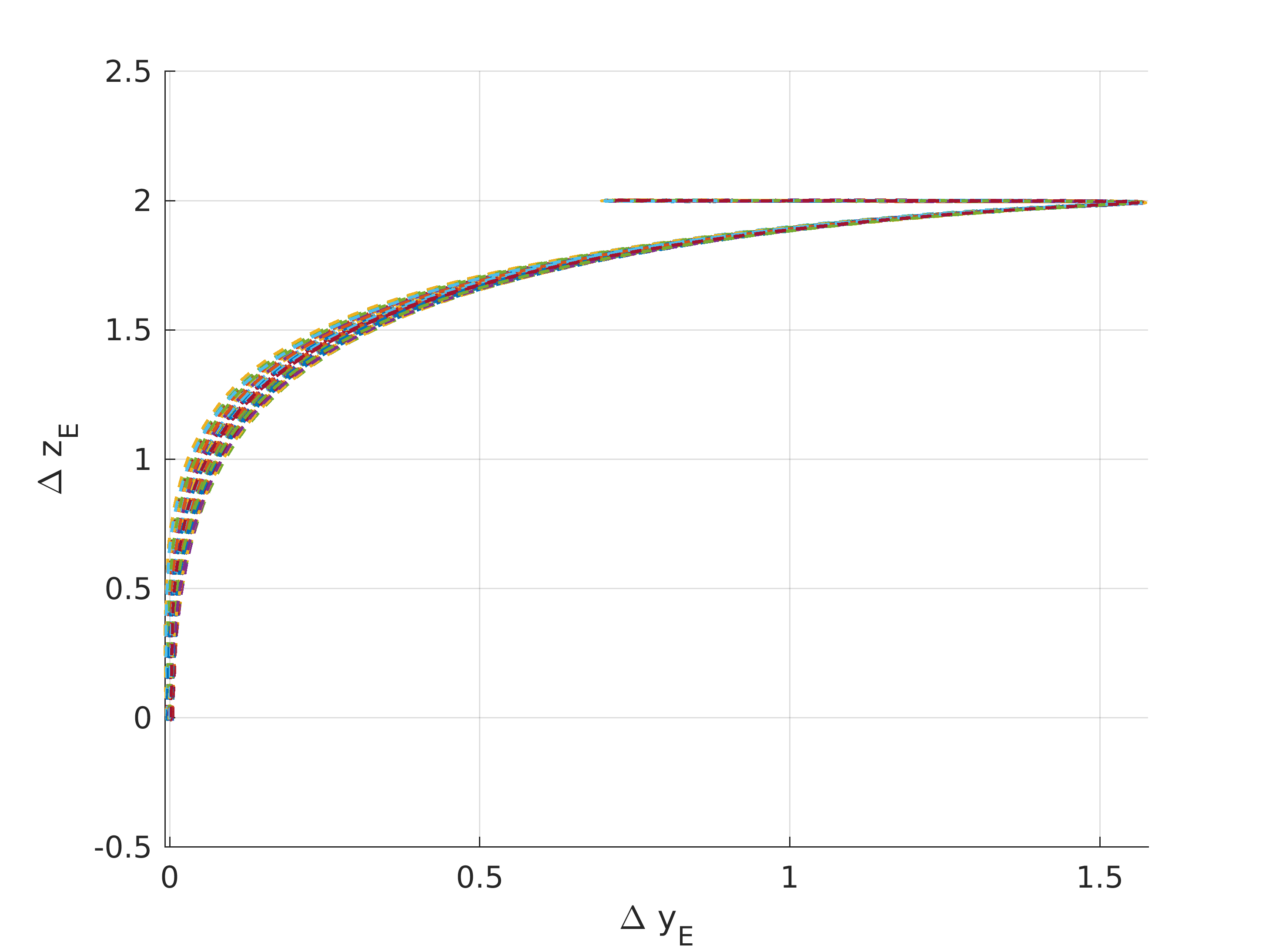}          
            \subcaption{$y$-$z$ Plane}
            \label{fig:quadrotor_planar}
        \end{subfigure}%
        \begin{subfigure}[t]{0.33\textwidth}
            \centering
            \includegraphics[width=1\linewidth]{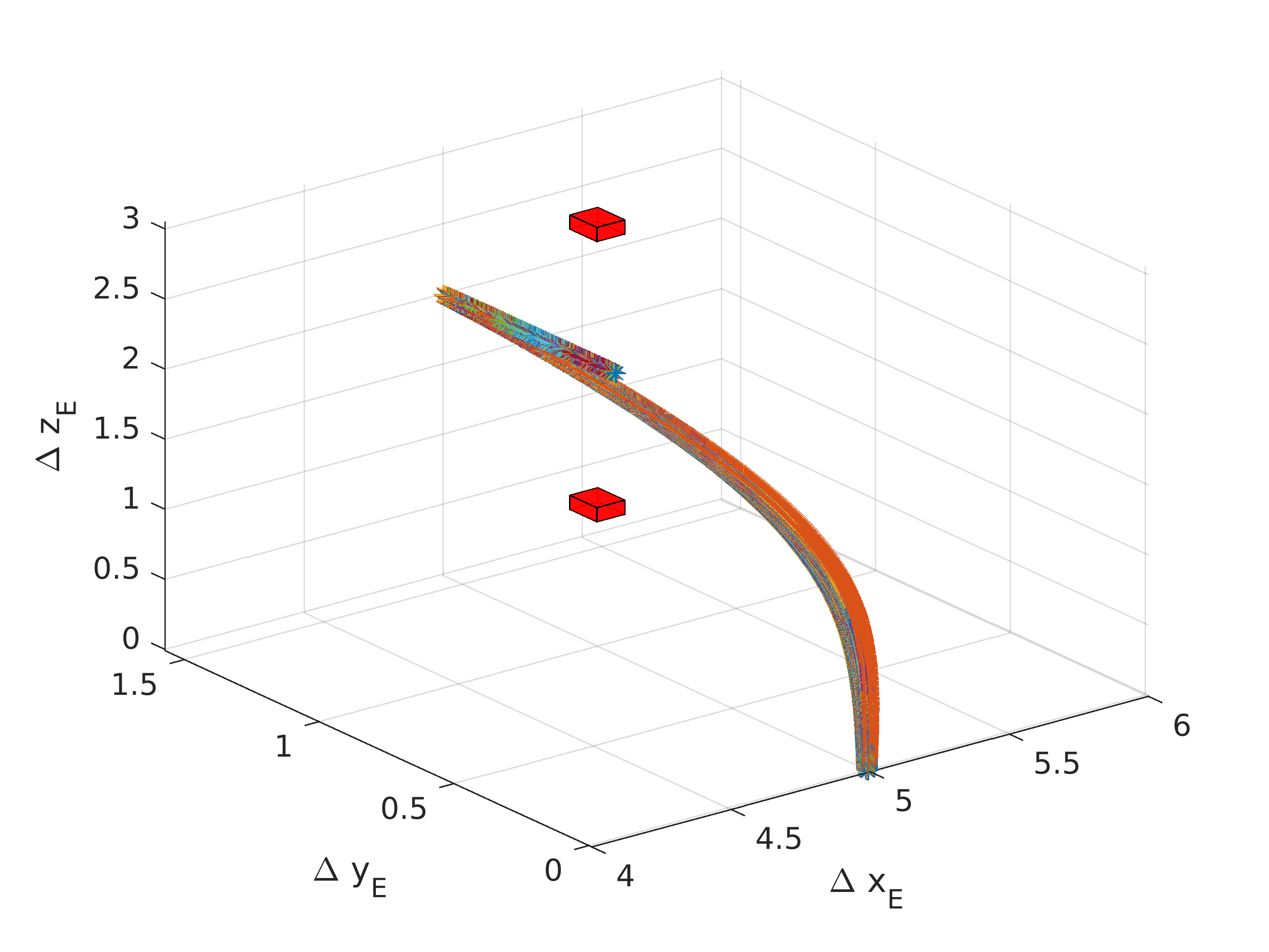}
            \subcaption{3D Workspace}
        \label{fig:quadrotor_3d}
        \end{subfigure}%
    \end{minipage}
    \caption{
    Monte Carlo simulations ($10^5$ instances) for the 6D quadcopter model with lateral-vertical dynamics for $10s$. Trajectories are visualized in (a) state vs. time plot, (b) $y$-$z$ Plane, and (c) 3D workspace.  The red boxes in (c) are the obstacles. The control laws stabilize the quadrotor around the equilibrium point $x_{\text{eq}} = (1, 0, 0, 0, 2, 0)$. 
    }
    \label{fig:quadrotor_trajectory}
\end{figure*}

To evaluate scalability of the proposed PWC-SBF methods, we considered a set of high-dimensional systems.
Specifically, we considered a quadrotor guidance model where the dynamics consist of lateral and vertical
dynamics and the longitudinal and spin variables are constrained, and then lateral and longitudinal motion model by fixing the vertical and spin variables. 

\subsubsection{6D Lateral-Vertical Model}
The lateral closed-loop state space model is defined as
\begin{align*}
    &\Delta \dot{y}_{E} = \Delta v, &&
    \Delta \dot{v} = g \Delta \phi, \\
    & \Delta \dot{\phi} = \Delta p, &&
    \Delta \dot{p} = \frac{1}{I_x} \Delta L_c,
\end{align*}
where 
$g$ is gravity, and $I_x$ is the inertia about the $x-$axis. State $y_{E} \in [-0.5, 2.0]$ is the $y-$position, and $v \in [-1.0, 1.0]$ is the corresponding velocity. The roll angle $\phi \in [-0.1, 0.1]$, the roll rate $p \in [-0.1, 0.1]$, and $L_c$ is the roll control moment.
The control law follows directly from
\begin{multline*}
    \Delta L_c = -k_{1} \Delta p - k_2 \Delta \phi
    - k_3 \Delta v - \\
    k_3 k_4 \Delta y_{E}
    + k_3 k_4 \Delta y_{E, r},
\end{multline*}
where $k_i$ denotes the gain for $i \in \left \{1, \ldots, 4 \right \}$, and $\Delta y_{E, r}$ is the reference position. 

The lateral model is appended with a guidance model for vertical motion given by
\begin{align*}
    \Delta \dot{z}_{E} = \Delta w, \qquad 
    \Delta \dot{w} = \frac{1}{m} \Delta Z_c,
\end{align*}
where $m$ denotes the quadrotor mass, $z_{E} \in [-0.5, 3.0]$ denotes the $z$-position and $w \in [-0.5,1.5]$ is the yaw rate. The control law $\Delta Z_c$ is defined as 
\begin{align*}
    \Delta Z_c = -k_{1} \Delta z_{E} - k_2 \Delta w - Fr
\end{align*}
where $Fr$ denotes the reference signal for the vertical dynamics.
The full dynamics constitutes a 6D linear model. 
Using $\Delta t = 0.01$, we obtained a discrete-time dynamics via the Euler method, to which  we added noise $\pv \sim \N(0, 10^{-4}I)$. 

The designed control laws stabilize the quadrotor around the equilibrium point $x_{\text{eq}} = (1, 0, 0, 0, 2, 0)$. 
Monte Carlo simulations of $10^5$ trajectories of this model is shown in Figure~\ref{fig:quadrotor_trajectory}.
The 3D workspace (environment) is shown in Figure~\ref{fig:quadrotor_3d}.

We considered two safe sets:
\begin{enumerate}
    \item Convex safe set: $X^{conv}_\safe$ is defined by the given ranges of of the variables above,
    \item Non-convex safe set: $X^{ncon}_\safe = X^{conv}_\safe \setminus \bigcup_{i=1}^2 \C(c_i,\epsilon_i)$, where
        \begin{align*}
            &c_1 = (1, 0, 0, 0, 1, 0), \qquad && \epsilon_1 = 0.01 \cdot \indicator{6 \times 1}\\
            &c_2 = (1, 0, 0, 0, 2.75, 0),  && \epsilon_2 = 0.01 \cdot \indicator{6 \times 1}. 
        \end{align*}    
    Figure~\ref{fig:quadrotor_3d} depicts the obstacles.
\end{enumerate}

\paragraph*{Convex $X^{conv}_\safe$ case}
Table~\ref{table:results} shows that the two \gls{pwc} methods (namely, \gls{cegs} and \gls{gd}) outperform \gls{sos}, both in terms of computation time, as well as safety probability certification.  It is important to note that systems with linear dynamics and convex $X_\safe$ are an ideal setup for \gls{sos} optimization. However, it has a lot of difficulty with the dimensionality of the system as the results suggest. For this case study, an SOS polynomial of degree greater than 8 runs out of memory.  
Finally, we were not able to run \gls{nbf} for this problem since it cannot handle hybrid controllers and suffers from scalability.
\paragraph*{Non-convex $X^{ncon}_\safe$ case}
In this case, it is even more evident that PWC-SBFs can be more powerful than continuous SBFs.
The \gls{sos} method provides $\low{P}_\safe = 0.00$, whereas the \gls{gd} method gets to $\low{P}_\safe = 0.900$. Observe that the dual \gls{lp} method times out, which is also true for the \gls{cegs} method when $K > 4.5\times 10^4$. The \gls{gd} approach is able to handle this, showing the capability of \gls{pwc} SBFs in scalability.

\subsection{8D Lateral-Longitudinal Dynamics}
Here, we aim to test the performance boundary of PWC-SBFs in terms of scalability.  We consider an 8D model by combining the lateral dynamics of the quadcopter with its longitudinal and fixing vertical and spin variables.
In a similar fashion as lateral, the longitudinal guidance is constructed under the dynamics
\begin{align*}
    & \Delta \dot{x}_{E} = \Delta u, &&
    \Delta \dot{u} = -g \Delta \delta, \\ 
    & \Delta \dot{\theta} = \Delta q, &&
    \Delta \dot{q}= \frac{1}{I_y} \Delta M_c, 
\end{align*}
where 
$g$ is gravity, and $I_y$ is the inertia about the $y-$axis. State $x_{E} \in [-0.5, 4.0]$ is the $x-$position, and $u \in [-0.5, 1.5]$ is the corresponding velocity. The pitch angle $\theta \in [-0.1, 0.1]$, the pitch rate $q \in [-0.1, 0.1]$, and $M_c$ is the pitch control moment.
The control law follows directly from
\begin{multline*}
    \Delta M_c = -l_{1} \Delta q - l_2 \Delta \theta
    - l_3 \Delta u - \\
    l_3 l_4 \Delta x_{E}
    + l_3 l_4 \Delta x_{E, r},
\end{multline*}
where $l_i$ denotes the gain for $i \in \left \{1, \ldots, 4 \right \}$, and $\Delta x_{E, r}$ is the reference position. 



The results are shown in the bottom 2 rows of Table~\ref{table:results}.
For this system, the SOS approach is unable to provide any safety guarantees, i.e., $\low{P}_\safe = 0$. The dimensionality naturally has a major effect on the performance of the SOS optimizer. 
The dual LP and CEGS likewise suffer, pertaining to the dimensionality and the partitioning of the state space. Nonetheless, the GD approach is still able to handle the large scale optimization, and can provide a lower safety probability of $\low{P}_\safe =  0.560$. While this is not near the statistical safety threshold obtained through simulation ($\tilde{P}_{\safe,{sim}} \approx 0.95 $), it 
clearly shows that the proposed PWC-SBF method outperforms the current
state-of-the-art SBF methods.

\section{Conclusion}
In this work, we introduce a formulation for piecewise (PW) stochastic barrier functions (SBFs).  We discuss the flexibility they provide at the cost of increased computational cost.  To lay the groundwork, we focus on their simplest form, namely, PW constant (PWC) functions.  We show that the synthesis of optimal PWC SBFs reduces to a constrained minimax optimization problem that includes bilinear terms.  We further show that PWC SBFs can obtain the same or even higher probabilistic safety guarantees than continuous SBFs.

We also propose three efficient computational methods for solving the minimax problem by relaxing bilinear terms.  This is achieved by (i) dual formulation in form of a linear program (LP), (ii) splitting the problem into two separate iterative LPs, where one synthesizes candidate function and the other provide counter examples, and finally (iii) designing a convex loss function that precisely captures the  objective function of the minimax problem, enabling the use of gradient descent method.
To show the expressivity of PWC SBFs and scalability of the proposed computational methods, we provide
extensive evaluations on a range of benchmark problems.  We also compare performance of the proposed methods against state-of-the-art \gls{sos} optimization and \gls{nbf} learning.  The results clearly show the efficacy and superiority of PWC SBFs. The PWC SBFs provide a new approach to ensuring safety for stochastic systems.

This study raises several interesting research questions for future investigations.  One is how to perform adaptive refinement of the pieces of the PWC SBF to reduce computational overhead.  Another interesting question is on the complexity of PW linear (and nonlinear) SBFs and whether they provide more expressivity (better safety guarantees) than PWC SBFs.


\bibliographystyle{ieeetr}        
\bibliography{bibliography}           

\appendix

\ifarxiv
\section{Extensive Results}
\label{appendix: detailed table}
The extensive results pertaining to the case studies are presented in Table \eqref{table:results_full}.

\begin{sidewaystable*}[t]
\caption{ This table provides extensive results pertaining to the case studies for safety verification based on \emph{piecewise constant} and \emph{continuous} stochastic barriers, as briefly established 
in Table~\ref{table:results}.
}
\label{table:results_full}
\resizebox{1\textwidth}{!}{%
\begin{tabular}{@{}c| c c| ccc | cccc | cccc || cccc| ccccc 
!{\color{white}\vline}
@{}}
\toprule
 &      \multicolumn{15}{c }{\underline{Piecewise Constant Stochastic Barriers}} 
 &  \multicolumn{5}{c }{\underline{Continuous Stochastic Barriers}} 
 \\
 &     & &  \multicolumn{3}{c}{ Dual Linear Program}  & \multicolumn{4}{c }{Counter-Example Guided} & \multicolumn{4}{c||}{Gradient Descent} & \multicolumn{4}{c|}{Neural Barrier Function} & \multicolumn{4}{c}{Sum-of-Squares  } \\ \cline{4-23}
  {Model} & {$|K|$} & $\mathcal{T}_{p} (s) $  &$\beta$ & $P_\safe$ &$\mathcal{T}_{0}(s)$ & 
  $\beta$ & $P_\safe$  & $\mathcal{I}$  & $\mathcal{T}_{o} (s) $& 
  $\beta$ & $P_\safe$  & $\mathcal{I}$  & $\mathcal{T}_{o} (s) $& $\eta$ & $\beta$  &  $P_\safe$ & $\mathcal{T}_{o}$ & Deg & $\eta$ & $\beta$ & $P_\safe$ & $\mathcal{T}_{o} (s) $  \\
  \hline 
  {Linear}  &  64 & 0.02 &  $8.4e^{-4}$  & 0.992 &  0.52
 & $8.4e^{-4}$&  0.992 & 2  &  0.04 &  $4.7e^{-3}$&  0.952 &  50 &   0.04  & $7.1e^{-5}$ & $4.1e^{-2}$ & 0.585 & 3850.93  &4  & $2.4e^{-1}$ & $1.8e^{-2}$ &  0.582 & 0.014 \\
2D  & 225 & 0.31 & $1.4e^{-4}$  & 0.998   & 164.60  & $1.4e^{-4}$ &  0.998 &  2 & 0.44 & $2.6e^{-3}$ & 0.973 & 50 & 0.20 &$2.0e^{-3}$ &$5.8e^{-3}$  & 0.940 & 3991.47 &  8& $2.4e^{-1}$& $1.7e^{-2}$& 0.582 & 0.265 
 \\ 
 \textit{Convex }  & 900  &  8.85 & $1.1e^{-4}$ & 0.999 & 1087.78  & $1.1e^{-4}$  & 0.999 & 2   & 17.93  & $1.0e^{-3}$ & 0.990  & 150 & 7.22 & $1.9e^{-3}$ & $3.7e^{-3}$ & 0.961 & 4025.67 & 30 & $1.4e^{-2}$ &$7.6e^{-4}$  & 0.978  & 151.16   \\
   & 2500  &   41.44 & $8.1e^{-5}$  & 0.999  & 2897.77  & $8.1e^{-5}$  & 0.999 & 2   & 88.45  &  $9.1e^{-4}$ & 0.998  & 175 & 52.78 & $1.7e^{-3}$ &$2.2e^{-3}$ & 0.976 & 4085.65& 36 & $7.0e^{-3}$
  & $5.9e^{-5}$   & 0.992  &    458.21 \\
\cline{3-16}
  \hline 
  {Linear}  & 900 & 5.04  & $5.1e^{-2}$  & 0.494  & 1197.99 
 & $5.1e^{-2}$  &  0.494 &  1 &  3.79  & $5.1e^{-2}$  & 0.494  & 50  &    3.52   & 1.5$e^{-1}$ & 6.1$e^{-3}$& 0.792 & 3546.69 &12  & $9.9e^{-1}$  & $1.0e^{-6}$ & 0.010  & 0.02 

 \\
2D & 1225  & 8.20 & $2.0e^{-2}$ & 0.800  & 1389.78 & $2.0e^{-2}$ & 0.800  & 1 & 7.22 & $2.0e^{-2}$ & 0.800 &  50 & 6.64 & 1.5$e^{-1}$ & 5.4$e^{-3}$& 0.844&3579.58& 20  & $9.9e^{-1}$ &  $2.3e^{-6}$ & 0.010 & 11.08 

\\ 
\textit{ Non-Convex} & 1444 & 9.18 & $7.9e^{-3}$ &  0.921 & 1545.45 & $7.9e^{-3}$& 0.921 &  1 & 11.65 &  $7.9e^{-3}$ & 0.921  & 55 & 10.12  &  1.4$e^{-1}$& 5.2$e^{-3}$& 0.855&3589.13 & 24 & $9.2e^{-1}$ & $5.3e^{-3}$  & 0.023   & 37.89    \\
   & 2926 & 47.98  & $7.2e^{-3}$ & 0.927  & 3161.56 &  $7.2e^{-3}$  & 0.927 & 2 &  98.36 &  $7.2e^{-3}$ & 0.927   & 115&  20.86 &  1.9$e^{-2}$ & 5.3$e^{-1}$ & 0.928 & 3599.85   & 26 & $9.1e^{-1}$ & $5.6e^{-3}$  & 0.034   & 62.88 \\
      & 5890 &  179.94 & $7.1e^{-3}$ & 0.929  & 8191.65 &  $7.1e^{-3}$  &  0.929& 3 & 458.44   & $7.1e^{-3}$   &  0.929  & 285 & 133.84 &1.8$e^{-2}$  &5.3$e^{-1}$ & 0.929& 3675.77 & 30 & $8.7e^{-1}$   & $4.7e^{-3}$ &  0.075 & 196.85 \\
    & 11818 & 477.45  & - & -  & \texttt{TO}&  $6.4e^{-3}$  & 0.936 & 3 & 1875.49  &  $6.4e^{-3}$ & 0.936  & 445 & 842.67 &1.7$e^{-2}$ &5.2$e^{-1}$ & 0.931 & 3744.23 & 34 & - & -  &  - & \texttt{OM} \\
        & 24336 &  987.65 & - & -  & \texttt{TO}& $6.2e^{-3}$  &  0.938  & 2 & 4441.55  &  $6.2e^{-3}$  & 0.938  & 595 & 3099.30 &1.6$e^{-2}$ & 4.8$e^{-3}$& 0.936& 4234.56  & 36 & - & -  &  - & \texttt{OM}  \\
\cline{3-16}
 \hline
{Pendulum}  & 120 & 6.37 & 1.0$e^{-6}$& 1.00 & 0.51  & 1.1$e^{-4}$&  0.99  & 50& 5.84 &  1.1$e^{-3}$ & 0.989 & 15000 & 3.75 & 7.7$e^{-4}$ & 2.1$e^{-5}$ &0.999 & 4242.89 & 4 & 5.9$e^{-5}$ & 2.1$e^{-4}$ & 0.999 & 7.71   \\
 2D &   240 & 18.33 &  1.0$e^{-6}$ &  1.00 & 6.08 & 2.2$e^{-5}$& 1.00  &200   & 14.88 & 1.0$e^{-3}$& 0.990 & 20000 & 9.99 &7.6$e^{-4}$  & 2.1$e^{-5}$ & 0.999 & 4457.82 & 4 & 4.9$e^{-5}$ & 2.1$e^{-4}$ & 0.999 & 34.96  \\
 &  480 &  37.84 & 1.0$e^{-6}$ & 1.00 &29.39  & 2.0$e^{-6}$  &  1.00 &  300 &43.42 & 8.1$e^{-5}$   & 0.999  & 40000  & 17.88  & 5.8e$^{-4}$& 1.9$e^{-5}$ & 0.999 & 4675.12 & 4 & 4.8$e^{-5}$ & 2.0$e^{-4}$ & 0.999 & 187.60\\ \cline{3-16} 
  \hline
  {Unicycle}  &  1250 & 1103.42 & $2.5e^{-2}$ &  0.750 &  1000.19 & $2.5e^{-2}$  &  0.750& 200 & 26.37  & $2.5e^{-2}$ & 0.750 & 300 & 5.68  &\cellcolor{gray!5} & \cellcolor{gray!5}& \cellcolor{gray!5}& \cellcolor{gray!5}& 2  & $9.9e^{-1}$ & 1.0$e^{-6}$ & 0.00  & 3110.21  \\
 4D &  1800 & 1756.25 & $2.5e^{-2}$ & 0.975  & 1719.58  &$2.5e^{-2}$& 0.975  & 200  & 92.26 & $2.5e^{-2}$   & 0.975 & 20000 & 25.78  & \cellcolor{gray!5} & \cellcolor{gray!5}& \cellcolor{gray!5}& \cellcolor{gray!5} & 4 & $9.9e^{-1}$ & 1.0$e^{-6}$ & 0.00  &   5451.19  \\
 & 2400 & 2001.11
 & $2.1e^{-3}$  & 0.998 &  2548.56 &  $2.1e^{-3}$  & 0.998 & 200  & 145.45&  $2.1e^{-3}$  &0.998 &40000  & 55.59 &\cellcolor{gray!5} & \cellcolor{gray!5}& \cellcolor{gray!5}& \cellcolor{gray!5} & 6  & -& -& - & \texttt{OM} \\
  \hline
   Quadrotor &  7865 & 80.80  & $2.3e^{-2}$ &0.770  & 9906.67 & $2.3e^{-2}$ & 0.770 & 450& 1174.49&  $2.3e^{-2}$ & 0.770   & 40000  & 2589.56 &\cellcolor{gray!5} & \cellcolor{gray!5}& \cellcolor{gray!5}& \cellcolor{gray!5}& 2 & $8.8e^{-2}$ & $3.3e^{-2}$  & 0.584  & 0.20 
   \\ 
6D  &  15625  & 160.61 & -& - & \texttt{TO}&  $9.9e^{-3}$ & 0.901 & 575 & 3788.98&  $9.9e^{-3}$ & 0.901  & 50000  & 3258.87   &\cellcolor{gray!5} & \cellcolor{gray!5}& \cellcolor{gray!5}& \cellcolor{gray!5}&   8   & $1.1e^{-3}$  &$9.9e^{-3}$  & 0.900 & 8628.58 \\ 
  \textit{Convex}   &  46656  & 458.59 &- &  - & \texttt{TO}&-  & - & -& \texttt{TO}&  $8.8e^{-3}$ & 0.912  & 120000  & 9542.75  &\cellcolor{gray!5} & \cellcolor{gray!5}& \cellcolor{gray!5}& \cellcolor{gray!5} & 12   & -  &-  &- &\texttt{OM} \\  
  \hline
 Quadrotor &  15625&  188.10 &- & - &\texttt{TO}&$3.3e^{-2}$  & 0.670&595 & 3845.25  & $3.3e^{-2}$  & 0.670  & 55550 & 3478.31 &\cellcolor{gray!5} & \cellcolor{gray!5}& \cellcolor{gray!5}& \cellcolor{gray!5}&  4 & $9.9e^{-1}$& $1.0e^{-6}$ & 0.00  & 4.91 \\ 
  6D &   31250& 395.59 &- & - &\texttt{TO}& $1.9e^{-2}$ & 0.810 & 615 & 9548.78 & $1.9e^{-2}$   & 0.810 & 89850 & 5878.28  &\cellcolor{gray!5} & \cellcolor{gray!5}& \cellcolor{gray!5}& \cellcolor{gray!5}  &  8 & $9.9e^{-1}$& $1.0e^{-6}$ & 0.00  & 8715.54 \\ 
 \textit{Non-Convex}  &  46656  & 506.99  &- &  - & \texttt{TO}&  -&- &- &\texttt{TO}&  $1.0e^{-2}$  & 0.900 & 121500  & 9789.54  &\cellcolor{gray!5} & \cellcolor{gray!5}& \cellcolor{gray!5}& \cellcolor{gray!5} & 12  &  - & - & -& \texttt{OM}\\  
 \hline
   Quadrotor & 65536  & 845.44   & - &  -& \texttt{TO}& - & - & -& \texttt{TO}&$5.0e^{-2}$  & 0.500  & 198550  & 19377.90 &\cellcolor{gray!5} & \cellcolor{gray!5}& \cellcolor{gray!5}& \cellcolor{gray!5} & 2& $9.9e^{-1}$& $1.0e^{-6}$ & 0.00  & 14830.23 \\ 
    8D & 128000 & 2530.74  &- & -& \texttt{TO}& - & - & - & \texttt{TO} & $4.4e^{-2}$  &  0.560 & 256700 & 39132.59 &\cellcolor{gray!5} & \cellcolor{gray!5}& \cellcolor{gray!5}& \cellcolor{gray!5} &   4  & - & - & - & \texttt{OM} \\ 
 \hline
\bottomrule
\end{tabular}
}
\end{sidewaystable*}

\fi

\end{document}